\theoremstyle{definition} 
\newtheorem{theorem}{Theorem}
\newtheorem{lemma}{Lemma}
\newtheorem{definition}{Definition}
\newtheorem{problem}{Problem}
\newtheorem{claim}{Claim}
\definecolor{lightGray}{gray}{0.9}
\pgfplotsset{scaled y ticks=false}
\pgfplotsset{every axis/.append style={
		scale=0.95,
		label style={font=\small},
		tick label style={font=\small},
	},
	compat=1.17}
\newcommand{\relu}{\text{ReLU}}
\newcommand{\E}{\mathbb{E}}
\newcommand{\R}{\mathbb{R}}
\newcommand{\X}{\mathbf{X}}
\newcommand{\Q}{\mathbf{Q}}
\renewcommand{\S}{\mathbf{S}}
\newcommand{\y}{\mathbf{y}}
\renewcommand{\v}{\mathbf{v}}
\renewcommand{\u}{\mathbf{u}}
\newcommand{\x}{\mathbf{x}}
\newcommand{\w}{\mathbf{w}}
\newcommand{\wh}{\hat{\w}}
\newcommand{\eps}{\varepsilon}
\DeclareMathOperator{\Var}{Var}
\newcommand{\z}{\mathbf{z}}
\DeclareMathOperator*{\rank}{rank}
\newcommand{\g}{\mathbf{g}}
\DeclareMathOperator*{\argmin}{arg\,min}
\newtheorem*{rep@theorem}{\rep@title}
\newcommand{\newreptheorem}[2]{%
	\newenvironment{rep#1}[1]{%
		\def\rep@title{#2 \ref{##1}}%
		\begin{rep@theorem}}%
		{\end{rep@theorem}}}
\title {Active Learning for Single Neuron Models with Lipschitz Non-Linearities}
\author{\small{Aarshvi Gajjar}}
\author{Chinmay Hegde}
\author{Christopher Musco}
\affil{New York University}
\date{}
  \newcommand{\cAAAI}[1]{AAAI\ Conference\ on\ Artificial (AAAI)}
\begin{document}
\maketitle
\begin{abstract}%
We consider the problem of active learning for single neuron models, also sometimes called ``ridge functions'', in the agnostic setting (under adversarial label noise). Such models have been shown to be broadly effective in modeling physical phenomena, and for constructing surrogate data-driven models for partial differential equations. 
 
 Surprisingly, we show that for a single neuron model with any Lipschitz non-linearity (such as the ReLU, sigmoid, absolute value, low-degree polynomial, among others), strong provable approximation guarantees can be obtained using a well-known active learning strategy for fitting \emph{linear functions} in the agnostic setting.
Namely, we can collect samples via statistical \emph{leverage score sampling}, which has been shown to be near-optimal in other active learning scenarios. We support our theoretical results with empirical simulations s  howing that our proposed active learning strategy based on leverage score sampling outperforms (ordinary) uniform sampling when fitting single neuron models. 
\end{abstract}


\section{Introduction}
This paper considers active learning methods for functions of the form $g(\x) = f(\langle \w,\x\rangle)$, where $\w$ is a weight vector and $f$ is a non-linearity. For a given distribution $\mathcal{D}$ on $\R^d \times \R$, a random vector $(\x,y)$ sampled from $\mathcal{D}$, and scalar function $f:\R\rightarrow \R$, our goal is to find $\w$ which minimizes the expected squared error:
\begin{align*}
	\E_{\x,y\sim \mathcal{D}} \left(f(\langle \w,\x\rangle) - y\right)^2.
\end{align*}
Functions of the form $f(\langle \w,\x\rangle)$ find applications in a variety of settings under various names: they are called ``single neuron'' or ``single index'' models, ``ridge functions'', and ``plane waves'' in different communities \citep{Pinkus:1997,Pinkus:2015,YehudaiOhad:2020,RaoGantiBalzano:2017,Candes:2003}. Single neuron models are studied in machine learning theory as tractable examples of simple neural networks \citep{DiakonikolasGoelKarmalkar:2020,GoelKanadeKlivans:2017}. Moreover, these models are known to be adept at modeling a variety of physical phenomena \citep{ConstantineRosarioIaccarino:2016} and for that reason are effective e.g., in building surrogate models for efficiently solving parametric partial differential equations (PDEs) and for approximating quantity of interest (QoI) surfaces for uncertainty quantification, model-driven design, and data assimilation \citep{OLeary-RoseberryVillaChen:2022,ConstantineEftekhariHokanson:2017,CohenDaubechiesDeVore:2012,Le-MaitreKnio:2010,LassilaRozza:2010,BinevCohenDahmen:2017}.

In these applications, single neuron models are used to fit complex functions over $\R^d$ based on queries from those functions. Often, the cost of obtaining a query $(\x,y)$ from the target function dominates the computational cost of fitting the model: each training point collected requires numerically solving the PDE under consideration with parameters given by $\x$ \citep{AdcockBrugiapagliaDexter:2022,CohenDeVore:2015}.
At the same time, we often have the {freedom} in exactly \emph{how} the query is obtained; we are not restricted to simply sampling from $\mathcal{D}$, but rather can specify a target location $\x$ and sample $y \sim \mathcal{D}\mid \x$ (or compute $y$ deterministically, since in most applications it is a deterministic function of $\x$).
Given these considerations, the focus of our work is on developing efficient \emph{active learning} and {experimental design} methods\footnote{We use ``experimental design'' to refer to methods that collect samples in a non-adaptive way. In other words, a set of points $\x_1, \ldots, \x_s$ are specified upfront and the corresponding $\y$ values are observed all at once. In contrast, in standard active learning methods, the choice of $\x_j$ can depend on the response values of all prior points $\x_1, \ldots, \x_{j-1}$.} for fitting single neuron models using as few carefully chosen $(\x,y)$ observations as possible.

We study this active learning problem in the challenging \emph{agnostic learning} or adversarial noise setting. Again, this is motivated by applications of single neuron models in computational science. Typically, while it can be approximated by a single neuron model, the QoI or surrogate function under consideration is not itself of the form $f(\langle \w,\x\rangle)$. For this reason, the agnostic setting has become the standard in work on PDE models involving other common function families, like structured polynomials \citep{ChkifaDexterTran:2018,CohenDeVore:2015,AdcockCardenasDexter:2022,HamptonDoostan:2015b}. In the agnostic setting, for a constant $C$ (or more stringently, $C = 1+\eps$), our goal is always to return with high probability some $\tilde{\w}$ such that:
\begin{align*}
\E_{\x,y\sim \mathcal{D}} \left(f(\langle \tilde{\w},\x\rangle) - y\right)^2 \leq C \cdot \min_{\w} \E_{\x,y\sim \mathcal{D}} \left(f(\langle {\w},\x\rangle) - y\right)^2.
\end{align*}
\subsection{Our Contributions}
For ease of exposition, we consider the case when $\mathcal{D}$ is a uniform distribution over $n$ points in $\R^d$. This is without loss of generality, since any continuous distribution can be approximated by the uniform distribution over a sufficient large, but finite, subset of points in $\R^d$.\footnote{For other function families (e.g. polynomials, or sparse Fourier functions) there has been recent work on active learning algorithms based on leverage score sampling that skip the discrete approximation step by developing algorithms directly tailored to common continuous distributions, like uniform or Gaussian \cite{ErdelyiMuscoMusco:2020}. We believe our techniques should be directly extendable to give comparable results for single neuron models.}.
 In this case, we have the following problem statement. 

\begin{problem}[Single Neuron Regression]\label{prob1}
	Given a matrix $\X \in \R^{n \times d}$ and query access to a target vector $\y \in \R^n$, for a given function $f:\R\rightarrow\R$, find $\w \in \R^d$ to minimize $\norm{f(\X\w) - \y}_2^2$ using as few queries from $\y$ as possible.
\end{problem}

When $f$ is an identity function, Problem \ref{prob1} reduces to active least squares regression, which has received a lot of recent attention in computer science and machine learning.  In the agnostic setting, state-of-the-art results can be obtained via ``leverage score'' sampling, also known as  ``coherence motivated'' or ``effective resistance'' sampling  \citep{AvronKapralovMusco:2019,CohenMigliorati:2017,RauhutWard:2012,HamptonDoostan:2015}. The idea behind leverage scores sampling methods is to collect samples from $\y$ randomly but non-uniformly, using an importance sampling distribution based on the rows of $\X$. More ``unique'' rows are selected with higher probability. Formally, rows are selected with probability proportional to their statistical leverage scores: 
\begin{definition}[Statistical Leverage Score]\label{def:lev_scores} The leverage score, $\tau_i(\X)$ of the $i^\text{th}$ row, $\x_i$ of a matrix, $\X \in \R^{n \times d}$ is equal to:\vspace{-1em}
	\begin{align*}
		\tau_i(\X) = \x_i^T (\X^T \X)^{-1} \x_i = \max_{\w \in \R^d}\frac{[\X\w]_i^2}{\norm{\X\w}_2^2} \, .
	\end{align*}
		Here, $[\X\w]_i$ denotes the $i^\text{th}$ entry of the vector $\X\w$. 
\end{definition}
We always have that $0 \leq \tau_i(\X) \leq 1$, and a well-known property of the statistical leverage scores is that $\sum_{i=1}^n \tau_i(\X) = \rank(\X) \leq d$.
The leverage score of a row is large (close to $1$) if that row has large inner product with some vector in $\R^d$, as compared to that vector's inner product with all other rows in the matrix $\X$. This means that the particular row enjoys significance in forming the row space of $\X$. 
For linear regression, it can be shown that when $\X$ has $d$ columns, leverage score sampling yields a sample complexity of $O(d\log d + d/\eps)$ to find $\hat{\w}$ satisfying $\|\X\hat{\w}- \y\|_2^2 \leq (1+\eps)\min_{\w}\|\X\w- \y\|_2^2$; moreover, this is optimal up to the $\log$ factor \citep{ChenPrice:2019a}. 

Our main contribution is to establish that, surprisingly, when combined with a novel regularization strategy, leverage score sampling \emph{also} yields theoretical guarantees for the more general case (Problem \ref{prob1}) for a broad class of non-linearities $f$. In fact, we only require that $f$ is $L$-Lipschitz for constant $L$, a  property that holds for most non-linearities used in practice (ReLU, sigmoid, absolute value, low-degree polynomials, etc.). 
We prove the following main result, which shows that $\tilde{O}(d^2/\eps^4)$ samples, collected via leverage score sampling, suffice for provably learning a single neuron model with Lipschitz non-linearity. 


\begin{theorem}[Main Result]\label{thm:main_result}
	Let $\X\in \R^{n\times d}$ be a data matrix and $\y\in\R^n$ be a target vector. Let $f$ be an $L$-Lipschitz non-linearity with $f(0) = 0$ and let $OPT = \min_{\w} \|f(\X{\w}) - \y\|_2^2 $. 
	There is an algorithm (Algorithm \ref{alg:active-lean}) that, based on the leverage scores of $\X$, observes $m = O\left(\frac{d^2 \log(1/\eps)}{\eps^4}\right)$ random entries from $\y$ and returns with probability $> 9/10$ a vector $\hat{\w}$ satisfying:
	\begin{align*}
		\|f(\X\hat{\w}) - \y\|_2^2 \leq C \cdot \left(OPT + \eps L^2 \|\X\w^*\|_2^2\right).
	\end{align*}
\end{theorem}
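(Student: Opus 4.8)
The plan is to run a \emph{regularized} empirical risk minimization on a leverage-score subsample, and to reduce its analysis to a uniform approximation statement over a region that the regularizer keeps small. Let $\S\in\R^{m\times n}$ be the diagonal row-sampling-and-rescaling matrix obtained by drawing $m = \tilde{O}(d^2/\eps^4)$ indices i.i.d.\ with probabilities $p_i \propto \tau_i(\X)$ and scaling the $j$-th sampled row by $1/\sqrt{m\,p_{i_j}}$, and set $\w^* = \argmin_{\w}\|f(\X\w)-\y\|_2^2$. The algorithm queries only the $m$ sampled coordinates of $\y$ and returns
\[
	\hat{\w} = \argmin_{\w\in\R^d}\ \|\S(f(\X\w)-\y)\|_2^2 \ +\ \lambda\,\|\X\w\|_2^2,
\]
for $\lambda = \Theta(\eps^2 L^2)$; since $\X$ and $f$ are known, every term is computable from the observations. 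The argument has two steps: (i) a one-sided bound showing the subsampled objective is small at $\hat\w$ because it is small at $\w^*$, which also \emph{confines} $\X\hat\w$ to a ball; and (ii) a uniform bound showing that on that ball the subsampled objective cannot fall far below the true objective.

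For step (i) I would use two standard consequences of leverage score sampling with this many rows: it is an $\eps$-subspace embedding for $\X$, so $\|\S\X\w\|_2^2 = (1\pm\eps)\|\X\w\|_2^2$ for all $\w$; and $\E\|\S\z\|_2^2 = \|\z\|_2^2$ for every fixed $\z\in\R^n$, so $\|\S\z\|_2^2 = O(\|\z\|_2^2)$ with probability $\ge 19/20$ by Markov's inequality. Applying Markov to $\z = f(\X\w^*)-\y$ bounds the subsampled objective at $\w^*$ by $O(OPT) + \lambda\|\X\w^*\|_2^2$, so by optimality the same bound holds at $\hat\w$; since that objective is at least $\lambda\|\X\hat\w\|_2^2$, we get $\|\X\hat\w\|_2^2 = O(OPT/\lambda + \|\X\w^*\|_2^2)$ and hence $\|\X(\hat\w-\w^*)\|_2^2 \le R^2$ with $R^2 := O(OPT/\lambda + \|\X\w^*\|_2^2)$. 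With $\lambda = \Theta(\eps^2 L^2)$, the quantity $\eps^2 L^2 R^2$ — which will be the additive slack produced in step (ii) — is $O(OPT + \eps^2 L^2\|\X\w^*\|_2^2) = O(OPT + \eps L^2\|\X\w^*\|_2^2)$, i.e.\ within the theorem's budget, and the regularization penalty at $\w^*$ is $\lambda\|\X\w^*\|_2^2 \le \eps L^2\|\X\w^*\|_2^2$.

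For step (ii), decompose $f(\X\w)-\y = h(\w) + r^*$ with $h(\w) := f(\X\w) - f(\X\w^*)$ and $r^* := f(\X\w^*)-\y$, $\|r^*\|_2^2 = OPT$. The goal is that for every $\w$ with $\|\X(\w-\w^*)\|_2\le R$,
\[
	\|f(\X\w)-\y\|_2^2 \ \le\ \|\S(f(\X\w)-\y)\|_2^2 \ +\ O(OPT + \eps^2 L^2 R^2).
\]
Because $f$ is $L$-Lipschitz and $\S$ is diagonal, $\|h(\w)-h(\w')\|_2 \le L\|\X(\w-\w')\|_2$ and $\|\S h(\w)-\S h(\w')\|_2 \le L\|\S\X(\w-\w')\|_2$, so $h$ is an $O(L)$-Lipschitz function of $\X(\w-\w^*)$ on the radius-$R$ ball, with $\|h(\w)\|_2 \le LR$ there. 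Covering that ($\le d$-dimensional) ball by a $\gamma$-net of cardinality $(O(R/\gamma))^d$ with $\gamma\approx\eps^2 R$, and extending off the net by Lipschitzness (using the subspace embedding to pass $\S\X$ errors through), it suffices to control, at each net point $\w_k$, the quantities $\|\S h(\w_k)\|_2^2$ and $\langle\S h(\w_k),\S r^*\rangle$. The crucial observation is that $h(\w_k)$ is a \emph{fixed} vector obeying, coordinatewise,
\[
	h(\w_k)_i^2 \ \le\ L^2[\X(\w_k-\w^*)]_i^2 \ \le\ L^2\tau_i(\X)\,\|\X(\w_k-\w^*)\|_2^2 \ \le\ L^2\tau_i(\X)R^2,
\]
by the variational characterization of leverage scores in Definition~\ref{def:lev_scores}; since $p_i = \Omega(\tau_i(\X)/d)$ (because $\sum_i\tau_i(\X)\le d$), this yields the uniform ``flatness'' bound $h(\w_k)_i^2/p_i = O(L^2 R^2 d)$, which simultaneously bounds the range and the variance of the (unbiased) estimators $\|\S h(\w_k)\|_2^2$ and $\langle\S h(\w_k),\S r^*\rangle$. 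A Bernstein bound, union-bounded over the $(O(1/\eps))^{O(d)}$ net points, then gives $\|\S h(\w_k)\|_2^2 = \|h(\w_k)\|_2^2 \pm O(\eps^2 L^2 R^2)$ and $\langle\S h(\w_k),\S r^*\rangle = \langle h(\w_k),r^*\rangle \pm O(\eps^2 L^2 R^2)$ simultaneously; balancing this additive accuracy against the net entropy $\Theta(d\log(1/\eps))$ in the Bernstein bound is what forces $m = \tilde O(d^2/\eps^4)$. Expanding $\|\S(f(\X\w)-\y)\|_2^2 = \|\S h(\w)\|_2^2 + 2\langle\S h(\w),\S r^*\rangle + \|\S r^*\|_2^2 \ge \|\S h(\w)\|_2^2 + 2\langle\S h(\w),\S r^*\rangle$ and comparing term by term with $\|f(\X\w)-\y\|_2^2 = \|h(\w)\|_2^2 + 2\langle h(\w),r^*\rangle + OPT$ via the above identities gives the displayed inequality. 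Applying it at $\w=\hat\w$ and combining with step (i) — where an $O(1)$ factor enters through the Markov bound on $\|\S r^*\|_2^2$ — proves the theorem with an absolute constant $C$.

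I expect step (ii) to be the crux, for two related reasons. First, in the linear case $f=\mathrm{id}$ the residual $\y-\X\w^*$ is orthogonal to the column space of $\X$, and it is exactly that orthogonality that lets leverage score sampling preserve $\|\X\w-\y\|_2^2$ \emph{multiplicatively} and yield a $(1+\eps)$ guarantee; for nonlinear $f$ no such orthogonality is available, so only a constant-factor guarantee with an additive slack is possible, and the entire purpose of the regularizer — the ``novel'' ingredient — is to shrink the confinement radius $R$ (hence the net cardinality and the required $m$) while contributing only the harmless term $\lambda\|\X\w^*\|_2^2 \le \eps L^2\|\X\w^*\|_2^2$. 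Second, leverage score sampling does \emph{not} preserve the norm of an arbitrary fixed vector, so the fact that it preserves the norms of the net vectors $h(\w_k)$ rests entirely on the Lipschitz-plus-leverage-score flatness estimate above; obtaining the analogous control on the residual cross terms $\langle\S h(\w_k),\S r^*\rangle$ — an approximate-matrix-multiplication-type bound for leverage score sampling, which needs care since $r^*$ may place mass on low-leverage coordinates — is the other piece of genuine work. A separate, minor point is that the outer minimization defining $\hat\w$ is non-convex in $\w$; the theorem concerns the statistical quality of the ERM solution, not the cost of computing it.
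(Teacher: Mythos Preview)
Your overall architecture—confine $\X\hat\w$ via a regularizer, then establish a one-sided comparison over the resulting ball by an $\eps$-net plus Bernstein—matches the paper's. The Tikhonov penalty $\lambda\|\X\w\|_2^2$ with $\lambda=\Theta(\eps^2 L^2)$ is a legitimate alternative to the paper's hard constraint $\|\S\X\w\|_2^2 \le \|\S\y\|_2^2/(\eps L^2)$: both produce a confinement radius $R$ with $\eps^2 L^2 R^2 = O(OPT + \eps L^2\|\X\w^*\|_2^2)$, and your step~(i) is correct.

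The gap is in step~(ii), at the cross term. You assert that the flatness bound $h(\w_k)_i^2/p_i = O(L^2 R^2 d)$ ``simultaneously bounds the range and the variance'' of the estimator for $\langle \S h(\w_k),\S r^*\rangle$. It does not. Each summand in that estimator is $h(\w_k)_i\,r^*_i/p_i$; flatness controls only the $h$-factor, yielding $|h(\w_k)_i r^*_i|/p_i \le LR\sqrt{d}\cdot |r^*_i|/\sqrt{p_i}$. Because $r^*=f(\X\w^*)-\y$ is adversarial and may concentrate on a coordinate of arbitrarily small leverage, $|r^*_i|/\sqrt{p_i}$ is unbounded. With no range bound you lose Bernstein's sub-exponential tail, and without that tail you cannot union-bound over $(1/\eps)^{O(d)}$ net points; the variance bound $L^2R^2d\cdot OPT$ only supports Chebyshev, whose polynomial tail is far too weak. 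You flag that this ``needs care,'' but no fix is supplied, and a direct one within the stated $m$ does not appear to exist.

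The paper simply avoids the cross term. Rather than expand $\|f(\X\w)-\y\|_2^2$ around $r^*$, it applies two triangle inequalities: first $\|f(\X\hat\w)-\y\|_2^2 \le 2\|h(\hat\w)\|_2^2 + 2\,OPT$; then, after the net lemma relates $\|h(\hat\w)\|_2^2$ to $\|\S h(\hat\w)\|_2^2$ uniformly (this step involves only $h$, so flatness plus Bernstein go through exactly as you describe), it uses $\|\S h(\hat\w)\|_2^2 \le 2\|\S(f(\X\hat\w)-\y)\|_2^2 + 2\|\S r^*\|_2^2$. Now $r^*$ appears only through the single fixed scalar $\|\S r^*\|_2^2$, which is $O(OPT)$ with constant probability by one Markov bound—no uniformity over the net is needed. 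The cost is a fixed constant $C$ rather than $1+\eps$, which the theorem already permits. If you replace your cross-term plan with this decoupling, the rest of your argument goes through.
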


The assumption $f(0) = 0$ in Theorem \ref{thm:main_result} is without loss of generality. If $f(0)$ is non-zero, we can simply solve a transformed problem with $\y' = \y - f(0)$ and $f'(x) = f(x) - f(0)$. The theorem mirrors previous results in the linear setting, and in contrast to prior work on agnostically  learning single neuron models, does not require any assumptions on $\X$ \citep{DiakonikolasKontonisTzamos:2022,TyagiCevher:2012}. In addition to multiplicative error $C$, the theorem has an additive error term of $C\eps L^2 \|\X\w^*\|_2^2$. An additive error term is necessary;  as we will show in Section \ref{sec:limitations}, it is provably impossible to achieve purely relative error with a number of samples polynomial in $d$. Similar additive error terms arise in related work on leverage score sampling for problems like logistic regression \citep{MunteanuSchwiegelshohnSohler:2018,MaiRaoMusco:2021}. On the other hand, we believe the $d^2$ dependence in our bound is not necessary, and should be improvable to linear in $d$. The dependence on $\eps$ is also likely improvable.

In Section \ref{sec:exp} we support our main theoretical result with an empirical evaluation on both synthetic data and several test problems that require approximating PDE quantity of interest surfaces. In all settings, leverage score sampling outperforms ordinary uniform sampling, often improving on the error obtained using a fixed number of samples by an order of magnitude or more.

\subsection{Related Work}

Single neuron models have been widely studied in a number of communities, including machine learning, computational science, and approximation theory. These models can be generalized to the ``multi-index'' case, where $g(\x) = f_1(\langle \x,\w_1) + \ldots + f_q(\langle \x,\w_1)$ \citep{KlusowskiBarron:2018,TyagiCevher:2012,Candes:2003} or to the case when $f$ is not known in advance (but  might be from a parameterized function family, such as low-degree polynomials) \citep{CohenDaubechiesDeVore:2012,HokansonConstantine:2018,ConstantineEftekhariHokanson:2017}. While we do not address these generalizations in this paper, we hope that our work can provide a foundation for further work in this direction.

Beyond sample complexity, there has also been a lot of interest in understanding the computational complexity of fitting single neuron models, including in the agnostic setting \citep{YehudaiOhad:2020}. There are both known hardness results for general data distributions~ \citep{DiakonikolasKaneZarifis:2020,GoelKarmalkarKlivans:2019,DiakonikolasKaneManurangsi:2022}, as well as positive results on efficient algorithms under additional assumptions~\citep{GoelKanadeKlivans:2017,DiakonikolasKontonisTzamos:2022,DiakonikolasGoelKarmalkar:2020}. 
Our work differs from this setting in two ways: 1) we focus on sample complexity; 2) we make no assumptions on the data distribution $\mathcal{D}$ (i.e., no assumptions on the data matrix $\X$); 3) we allow for active sampling strategies. Obtaining results under i.i.d.\ sampling, even for well behaved distributions, inherently requires additional assumptions, like $\w^* = \min_{\w}\|f(\X\w)-\y\|_2$ being bounded in norm or $\X$ having bounded condition number.

In computational science, there has been significant work on active learning and experimental design for fitting other classes of functions adept at modeling high dimensional physical phenomena~\citep{ChkifaDexterTran:2018,CohenDeVore:2015,AdcockCardenasDexter:2022,HamptonDoostan:2015b}. Most such results focus on minimizing squared prediction error. For situations involving model mismatch, the agnostic (adversarial noise) setting is more appropriate than assuming i.i.d.\, zero mean noise, which is more typical in classical statistical results on experimental design \citep{Pukelsheim:2006}. To obtain results in the agnostic setting for \emph{linear} function families, much of the prior work in computational science uses ``coherence motivated'' sampling techniques \citep{CohenMigliorati:2017,RauhutWard:2012,HamptonDoostan:2015}. Such methods are equivalent to leverage score sampling \citep{AvronKapralovMusco:2019}.

Leverage score sampling and related methods have found widespread applications in the design of efficient algorithms, including in the construction of graph sparsifiers, coresets, randomized low-rank approximations, and in solving over-constrained linear regression problems \citep{SpielmanSrivastava:2011,FeldmanLangberg:2011,DrineasMahoneyMuthukrishnan:2008,CohenMuscoMusco:2017,MuscoMusco:2017,DasguptaDrineasHarb:2008,DrineasMahoneyMuthukrishnan:2006,CohenLeeMusco:2015}. 

Recently, there has been renewed attention on using leverage score sampling for solving the active linear regression problem for various loss functions \citep{ErdelyiMuscoMusco:2020,ChenDerezinski:2021,MeyerMuscoMusco:2023, MuscoMuscoWoodruff:2022}. While all of these results are heavily tailored to linear models, a small body of works addresses the problem of active learning for non-linear problems. This includes problems of the form $\|f(\X\w - \y)\|_2^2$, where $f$ is Lipschitz \citep{MunteanuSchwiegelshohnSohler:2018,MaiRaoMusco:2021}. While not equivalent to our problem, this formulation captures important tasks like logistic regression. Our work can be viewed as broadening the range of non-linear problems for which leverage score sampling yields natural worst-case guarantees. 

Finally, we mention a few papers that directly address the active learning problem for functions of form $f(\X\w)$. \cite{CohenDaubechiesDeVore:2012} studies adaptive query algorithms, but in a different setting than our paper. Specifically, they address the easier (noiseless) realizable setting, where samples are obtained from a function of the form $f(\X\w^*)$ for a ground truth $\w^*$. They also make stronger smoothness assumptions on $f$, although their algorithm can handle the case when $f$ is not known in advance. Follow-up work also address the multi-index problem in the same setting \citep{FornasierSchnassVybiral:2012}. Also motivated by applications in efficient PDE surrogate modeling, \cite{TyagiCevher:2012} study the multi-index problem, but again in the realizable setting. Their techniques can handle mean centered i.i.d.\ noise, but not adversarial noise or model mismatch. 
\section{Preliminaries}

\textbf{Notation.} Throughout the paper, we use bold lower-case letters for vectors and bold upper-case letters for matrices. We let $\mathbf{e}_i$ denotes the $i^\text{th}$ standard basis vector (all zeros, but with a $1$ in position $i$). The dimension of $\mathbf{e}_i$ will be clear from context.  For a vector $\x\in \R^n$ with entries $x_1, \ldots, x_n$, $\|\x\|_2 = (\sum_{i=1}^n x_i^2)^{1/2}$ denotes the Euclidean norm. $\mathcal{B}^d(r)$ denotes a ball of radius $r$ centered at $0$, i.e. $\mathcal{B}^d(r) = \{\x \in \R^d: \norm{\x}\le r \}$. For a scalar function $f:\R\rightarrow \R$ and vector $\y$, we use $f(\y)$ to denote the vector obtained by applying $f$ to each entry of $\y$. For a fixed matrix $\X$, unobserved target vector $\y$, and non-linearity $f$, we denote $\|f(\X\w^*) - \y\|_2^2$ by $OPT$ where $\w^* \in \argmin_{\w} \|f(\X\w) - \y\|_2$. 

\noindent \textbf{Importance Sampling.} As discussed, our approach is based on importance sampling according to statistical leverage scores: we fix a set of probabilities $p_1, \ldots, p_n$, use them to sample $m$ rows from the regression problem $\norm{f(\X\w) - \y}_2^2$, and solve a reweighted least squares problem to find a near optimal choice for $\w$. Formally, this can be implemented by defining a sampling matrices of the following form:\\
\begin{definition}[Importance Sampling Matrix]\label{def:imp_samp_matrix}
	Let $\{p_1, \ldots, p_n\} \in [0,1]^n$ be a given set of probabilities (so that $\sum_i p_i = 1$). A matrix
	$\S$ is an $m \times n$ \textit{importance sampling matrix} if each of its rows is chosen to equal $\frac{1}{\sqrt{m\cdot p_i}}\cdot \mathbf{e}_i$ with probability proportional to $p_i$. 
\end{definition}
To compute an approximate to $\min \norm{f(\X\w) - \y}_2^2$ we will solve an optimization problem involving the sub-sampled objective $\norm{\S f(\X\w) - \S\y}_2^2$. It is easily verified that for \emph{any} choice of $p_1, \ldots, p_n$ and \emph{any} vector $\z$, 
$\E\left[ \norm{\S\z}_2^2\right] = \norm{\z}_2^2.$
We will use this fact repeatedly.\\

\noindent \textbf{Properties of Leverage Scores.} Our importance sampling mechanism is based on sampling by the leverage scores $\tau_1(\X), \ldots, \tau_n(\X)$ of the design matrix $\X\in \R^{d\times n}$. For \emph{any} full-rank $d\times d$ matrix $\mathbf{R}$, we have that $\tau_i(\X\mathbf{R}) = \tau_i(\X)$. This is clear from Definition \ref{def:lev_scores} and implies that $\tau_i(\X)$ only depends on the column span of $\X$. In our proofs, this property will allow us to easily reduce to the setting where $\X$ is assumed to be a matrix with orthonormal columns. 

We will also use the following well-known fact about using leverage score sampling to construct a ``subspace embedding'' for a matrix $\X$. 

\begin{lemma}[Subspace Embedding (see e.g. Theorem 17 in \citet{Woodruff:2014}]
	\label{lem:subspace}
	Given $\X\in \R^{n\times d}$ with leverage scores $\tau_1(\X), \ldots, \tau_n(\X)$, let $p_i  = \tau_i(\X)/\sum_i\tau_i(\X)$. Let $\S \in \R^{m\times n}$ be a sampling matrix constructed as in Definition \ref{def:imp_samp_matrix} using the probabilities $p_1, \ldots, p_n$. For any $0< \gamma < 1$, as long as $m \geq c\cdot d\log(d/\delta)/\gamma^2$ for some fixed constant $c$, then with probability $1-\delta$ we have that for all $\w \in \R^d$,
	\begin{align*}
		(1-\gamma) \|\X\w\|_2^2 \leq \|\S\X\w\|_2^2 \leq (1+\gamma) \|\X\w\|_2^2.
	\end{align*}
\end{lemma}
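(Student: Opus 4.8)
The plan is to reduce the ``for all $\w$'' statement to a single bound on the operator norm of a random matrix, and then apply a matrix Chernoff inequality. First I would use the fact recalled just above the lemma: leverage scores are invariant under right multiplication by an invertible $d \times d$ matrix. Assuming without loss of generality that $\X$ has full column rank (otherwise restrict attention to a basis for its column span), factor $\X = \mathbf{U}\mathbf{R}$ with $\mathbf{U}\in\R^{n\times d}$ having orthonormal columns, e.g.\ via a thin QR or SVD factorization. Then $\tau_i(\X) = \tau_i(\mathbf{U}) = \|\u_i\|_2^2$, where $\u_i^T$ denotes the $i$-th row of $\mathbf{U}$, and as $\w$ ranges over $\R^d$ the quantities $\|\X\w\|_2^2$ and $\|\S\X\w\|_2^2$ take exactly the same values as $\|\mathbf{U}\v\|_2^2 = \|\v\|_2^2$ and $\|\S\mathbf{U}\v\|_2^2$ do as $\v$ ranges over $\R^d$. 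So it suffices to prove that, with probability $1-\delta$, $(1-\gamma)\|\v\|_2^2 \le \|\S\mathbf{U}\v\|_2^2 \le (1+\gamma)\|\v\|_2^2$ for every $\v$, which is equivalent to the single statement $\|\mathbf{U}^T\S^T\S\mathbf{U} - \mathbf{I}\|_2 \le \gamma$.

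Next I would write $\mathbf{U}^T\S^T\S\mathbf{U}$ as a sum of independent rank-one matrices and bound the pieces. By Definition \ref{def:imp_samp_matrix}, the $j$-th row of $\S$ is $\frac{1}{\sqrt{m p_{i_j}}}\mathbf{e}_{i_j}^T$ where $i_1,\ldots,i_m$ are i.i.d.\ with $\Pr[i_j = i] = p_i$, so $\mathbf{U}^T\S^T\S\mathbf{U} = \sum_{j=1}^m \Q_j$ with $\Q_j = \frac{1}{m p_{i_j}}\u_{i_j}\u_{i_j}^T$. Each $\Q_j$ is positive semidefinite. Using $p_i = \tau_i(\X)/\sum_k\tau_k(\X)$, $\|\u_i\|_2^2 = \tau_i(\X)$, and $\sum_k\tau_k(\X) = \rank(\X)\le d$, we obtain the deterministic operator-norm bound $\|\Q_j\|_2 = \frac{\|\u_{i_j}\|_2^2}{m p_{i_j}} = \frac{\sum_k\tau_k(\X)}{m} \le \frac{d}{m}$, and likewise $\E[\Q_j] = \sum_{i=1}^n p_i \cdot \frac{1}{m p_i}\u_i\u_i^T = \frac{1}{m}\mathbf{U}^T\mathbf{U} = \frac{1}{m}\mathbf{I}$, so that $\E[\mathbf{U}^T\S^T\S\mathbf{U}] = \mathbf{I}$.

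Finally I would invoke the matrix Chernoff bound for sums of independent positive semidefinite random matrices (Tropp): if $\Q_1,\ldots,\Q_m$ are independent and PSD with $\|\Q_j\|_2 \le R$ and $\E[\sum_j\Q_j] = \mathbf{I}$, then $\Pr[\lambda_{\max}(\sum_j\Q_j) \ge 1+\gamma] \le d\,e^{-\gamma^2/(3R)}$ and $\Pr[\lambda_{\min}(\sum_j\Q_j) \le 1-\gamma] \le d\,e^{-\gamma^2/(2R)}$ for $\gamma\in(0,1)$. With $R = d/m$, both probabilities are at most $d\,e^{-\gamma^2 m/(3d)}$, and taking $m \ge c\,d\log(d/\delta)/\gamma^2$ for a large enough constant $c$ makes their sum at most $\delta$. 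On the complementary event every eigenvalue of $\mathbf{U}^T\S^T\S\mathbf{U}$ lies in $[1-\gamma,1+\gamma]$, i.e.\ $\|\mathbf{U}^T\S^T\S\mathbf{U} - \mathbf{I}\|_2 \le \gamma$, which is exactly what we wanted. The only part requiring real care is quoting the matrix concentration inequality with correct constants (and the trivial bookkeeping for rank-deficient $\X$); the reduction to orthonormal columns and the moment computations are routine.
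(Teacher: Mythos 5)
Your proof is correct and is exactly the argument the paper has in mind: the paper does not prove Lemma \ref{lem:subspace} itself but cites it as standard and notes that it ``can be proven using an argument that reduces to a matrix Chernoff bound,'' which is precisely your reduction to orthonormal columns followed by Tropp's Chernoff bound for sums of independent PSD rank-one matrices. The moment computations and the norm bound $\|\Q_j\|_2 \le d/m$ are right, so no further comment is needed.
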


Lemma \ref{lem:subspace} establishes that, with high probability, leverage score sampling preserves the norm of any vector $\X\w$ in the column span of $\X$. This guarantee can be proven using an argument that reduces to a matrix Chernoff bound \citep{SpielmanSrivastava:2011}. This is a critical component for previously known active learning guarantees for fitting linear functions using leverage score sampling \citep{Sarlos:2006}. 

\section{Main Result}
With preliminaries in place, we are ready to prove Theorem \ref{thm:main_result}. We begin with pseudocode for Algorithm \ref{alg:active-lean}, which obtains the guarantee of the theorem via leverage score sampling combined with a novel regularization strategy. 

\begin{algorithm}
	\caption{Leverage Score Based Active Learning for Single Neuron Models}
	\label{alg:active-lean}
	\raggedright
	{\bfseries input}: Matrix $\X\in \R^{n\times d}$, $L$-Lipschitz non-linearity $f: \R\rightarrow \R$ with $f(0)=0$,  query access to target vector $\y\in \R^n$, number of samples $m$.\\
	{\bfseries output}: Approximate solution to $\min_{\w}\|f(\X\w) - \y\|_2^2$.\\
	\begin{algorithmic}[1]
		\STATE Compute $\tau_i(\X)$ for all $i = 1, \ldots, n$.\\
		\STATE Set $p_i = \tau_i(\X) /\sum_{j=1}^n \tau_i(\X)$ for all $i = 1, \ldots, n$.\\
		\STATE Construct $\S\in \R^{m \times n}$ according to Definition \ref{def:imp_samp_matrix} with probabilities $p_1, \ldots, p_n$. 
		\STATE Query $\y$ at $m$ locations to obtain $\S\y \in \R^m$. 
		\STATE Solve the minimization problem: \vspace{-.5em}
		\begin{align*}\hat{\w} = \argmin_{\w: \|\S\X\w\|_2^2 \leq \frac{1}{\eps\cdot L^2}\|\S (\y\|_2^2} \|\S f(\X\w) -\S \y\|_2^2\end{align*}
		\vspace{-1em}
		\STATE {\bfseries return} $\hat{\w}$ 
	\end{algorithmic}
\end{algorithm}

The core of Algorithm~\ref{alg:active-lean} is the optimization problem:
\begin{align}
\hat{\w} = \argmin_{\w: \|\S\X\w\|_2^2 \leq \frac{1}{\eps\cdot L^2}\|\S (\y\|_2^2} \|\S f(\X\w) -\S \y\|_2^2 \, .
\label{eq:contrained_sketch_and_solve} 
\end{align}
Since this problem involves a Euclidean constraint on $\S\X\w$, it is notably different from the more standard (weighted) empirical risk minimization problem: $\min_{\w} \|\S f(\X\w) -\S \y\|_2^2$. We believe that the norm constraint is necessary for getting acceptable upper bounds in the agnostic setting and cannot be eliminated. However, in our experiments (Section \ref{sec:exp}) we were able to safely ignore the constraint without hurting empirical performance. 
In any case, with or without constraint, minimizing \eqref{eq:contrained_sketch_and_solve} is a non-convex neuron fitting problem, and we do not attempt to theoretically analyze its computational complexity in this paper; however, it can be solved easily in practice using standard first-order optimization methods (such as gradient descent or its projected version). 
%

As an first step to proving Theorem \ref{thm:main_result}, we link the quality of the solution to \eqref{eq:contrained_sketch_and_solve} to that of the optimum regressor, $\w^* \in \argmin_{\w} \|f(\X\w) - \y\|_2$, as follows. 
\begin{claim}\label{claim:trade_off}
Let $\hat{\w}$ be the vector returned by Algorithm \ref{alg:active-lean} and let $OPT = \|f(\X\w^*) - \y\|_2^2$. 
	With probability $49/50$, for a fixed constant $C > 0$, we have
	\begin{align*}
		\|\S f(\X\hat{\w}) -\S \y\|_2^2 \leq C\cdot \left(OPT + \eps L^2 \|\X\w^*\|_2^2\right).
	\end{align*}
\end{claim}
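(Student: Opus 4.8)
The plan is to exploit the one structural fact we control about $\hat{\w}$: it minimizes the sketched loss $\|\S f(\X\w)-\S\y\|_2^2$ over the constraint set of \eqref{eq:contrained_sketch_and_solve}, call it $\mathcal{C} = \{\w : \|\S\X\w\|_2^2 \le \tfrac{1}{\eps L^2}\|\S\y\|_2^2\}$, and that $\mathcal{C}$ always contains $\mathbf{0}$ but contains the unconstrained optimum $\w^*$ only sometimes. So I would split the analysis into two cases according to whether $\w^*$ is feasible for the sketched constraint. The guiding principle is that $\hat{\w}$ depends on $\S$, so one may never apply the identity $\E\|\S\z\|_2^2 = \|\z\|_2^2$ to a vector built from $\hat{\w}$; instead all dependence on $\hat{\w}$ is routed through the deterministic optimality inequality, and that identity is invoked only for the two \emph{fixed} vectors $f(\X\w^*)-\y$ and $\X\w^*$.

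\textbf{Case 1: $\w^*\in\mathcal{C}$.} Then optimality of $\hat{\w}$ gives $\|\S f(\X\hat{\w})-\S\y\|_2^2 \le \|\S f(\X\w^*)-\S\y\|_2^2$. Applying $\E\|\S\z\|_2^2=\|\z\|_2^2$ with the fixed vector $\z = f(\X\w^*)-\y$ yields $\E\|\S f(\X\w^*)-\S\y\|_2^2 = OPT$, so by Markov's inequality $\|\S f(\X\w^*)-\S\y\|_2^2 \le 100\,OPT$ with probability at least $99/100$, which already proves the claim here with room to spare in the constant.

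\textbf{Case 2: $\w^*\notin\mathcal{C}$}, i.e. $\|\S\X\w^*\|_2^2 > \tfrac{1}{\eps L^2}\|\S\y\|_2^2$, equivalently $\|\S\y\|_2^2 < \eps L^2\|\S\X\w^*\|_2^2$. Now I would compare $\hat{\w}$ to $\mathbf{0}\in\mathcal{C}$: since $f(0)=0$ we have $f(\X\mathbf{0})=\mathbf{0}$, so optimality gives $\|\S f(\X\hat{\w})-\S\y\|_2^2 \le \|\S\y\|_2^2 < \eps L^2\|\S\X\w^*\|_2^2$. It remains to replace $\|\S\X\w^*\|_2^2$ by $\|\X\w^*\|_2^2$: since $\w^*$ is fixed, $\E\|\S\X\w^*\|_2^2 = \|\X\w^*\|_2^2$, so Markov gives $\|\S\X\w^*\|_2^2 \le 100\|\X\w^*\|_2^2$ with probability at least $99/100$ (alternatively one could invoke the subspace embedding of Lemma \ref{lem:subspace} for the sharper deterministic constant $\tfrac32$, at the cost of a mild lower bound on $m$). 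Hence $\|\S f(\X\hat{\w})-\S\y\|_2^2 < 100\,\eps L^2\|\X\w^*\|_2^2$ in this case.

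A union bound over the two Markov events, each failing with probability at most $1/100$, leaves success probability at least $49/50$, and on that event both cases give $\|\S f(\X\hat{\w})-\S\y\|_2^2 \le 100\bigl(OPT + \eps L^2\|\X\w^*\|_2^2\bigr)$, so $C=100$ suffices. I do not expect a real obstacle in this claim — the only delicate point is the $\S$-dependence of $\hat{\w}$ flagged above, and the case split is engineered precisely to sidestep it. The genuinely hard work lies \emph{outside} this claim, in the complementary step of Theorem \ref{thm:main_result}: transferring this bound on the sketched loss back to the true loss $\|f(\X\hat{\w})-\y\|_2^2$, where the constraint $\|\S\X\hat{\w}\|_2^2 \le \tfrac{1}{\eps L^2}\|\S\y\|_2^2$, the $L$-Lipschitz property, and a uniform concentration argument over the feasible set must be combined — and this is where the $d^2/\eps^4$ sample requirement will appear.
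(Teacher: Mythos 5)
Your proof is correct and follows essentially the same two-case argument as the paper: feasibility of $\w^*$ gives the $OPT$ bound via Markov, and otherwise comparison with $\mathbf{0}$ (using $f(0)=0$) gives the $\eps L^2\|\X\w^*\|_2^2$ bound. The only cosmetic difference is that in the second case the paper converts $\|\S\X\w^*\|_2^2$ to $2\|\X\w^*\|_2^2$ via the subspace embedding of Lemma \ref{lem:subspace}, while you use Markov's inequality on the fixed vector $\X\w^*$ (and correctly note the subspace embedding as an alternative); both are valid.
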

\begin{proof}
	Consider the case when $\|\S\X\w^*\|_2^2 \leq \frac{1}{\eps L^2}\|\S \y\|_2^2$. Then $\w^*$ satisfies the constraint of the above optimization problem so we have that $$\|\S f(\X\hat{\w}) -\S \y\|_2^2\leq \|\S f(\X\w^*) -\S \y\|_2^2 \leq C\cdot OPT.$$ The last inequality follows with probability $49/50$ via Markov's inequality since $$\E\left[\|\S f(\X\w^*) -\S \y\|_2^2\right] = \| f(\X\w^*) -\y\|_2^2 = OPT.$$
 
 On the other hand, consider the case where $\|\S\X\w^*\|_2^2 \ge \frac{1}{\eps L^2}\|\S \y\|_2^2$. Then we have that $\|\S\y\|_2^2 \leq \eps L^2\cdot  \|\S \X\w^*\|_2^2$. In this case, we can plug in the zero vector to the above minimization problem (since zero clearly satisfies the constraint) and conclude again that:
	\begin{align*}
		\|\S f(\X\hat{\w}) -\S \y\|_2^2 &\leq \|\S f(\X\mathbf{0}) - \S \y\|_2^2 = \|\S \y\|_2^2\\
  &\leq \eps L^2\|\S \X\w^*\|_2^2 \leq 2\eps L^2 \|\X\w^*\|_2^2.
	\end{align*}
The last inequality follows from the subspace embedding inequality from Lemma \ref{lem:subspace}. Note that the constraint of $f(0) = 0$ is used above as
$f(\X\mathbf{0}) = f(\mathbf{0}) = \mathbf{0}$. 
\end{proof}

\subsection{Concentration Bounds}
Claim \ref{claim:trade_off} upper bounds the error of $\hat{\w}$ in solving the \emph{subsampled} regression problem $\|\S f(\X\hat{\w}) -\S \y\|_2^2$. To show that it also provides a good solution for the original problem $\min_{\w}\|f(\X\hat{\w}) - \y\|_2^2$ we require several concentration results that are a consequence of leverage score sampling. These results are similar to Lemma \ref{lem:subspace}, except that they show that sampling with $\S$ also preserves the norm of vectors obtained via \emph{non-linear} transformations of the form $f(\X \w)$.

The first bound gives a guarantee on preserving the distance between two fixed vectors, $f(\X\w_1) - f(\X\w_2)$. In contrast to the relative error subspace embedding of Lemma \ref{lem:subspace}, the bound involved an additive error term; this extra term is likely unavoidable in the most general case of Lipschitz $f$. 

\begin{lemma}\label{prop:bern}
	Let $f$ and $\X$ be as in Theorem \ref{thm:main_result}. Let $\S\in \R^{m\times n}$ be an {importance sampling matrix} chosen with probabilities $\{p_1, \ldots, p_n\}$, where $p_i = \tau_i(\X)/\sum_{i=1}^n \tau_i(\X)$. As long as $m \geq \frac{3d\log(2/\delta)}{\eps^2}$, then with probability $\ge 1-\delta$, for any fixed pair of vectors $\w_1, \w_2 \in \R^d$, we have:  
	\begin{align*}
		\norm{f(\X\w_1) - f(\X\w_2)}_2^2 - \eps L^2 \norm{\X\w_1 - \X\w_2}_2^2 &\leq   \norm{\S f(\X\w_1) - \S f(\X\w_2)}_2^2 \\ &\leq \norm{f(\X\w_1) - f(\X\w_2)}_2^2 + \eps L^2 \norm{\X\w_1 - \X\w_2}_2^2.
	\end{align*}
\end{lemma}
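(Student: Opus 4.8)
The plan is to reduce the two-sided inequality to a single application of Bernstein's inequality for a sum of bounded i.i.d.\ random variables. Write $\z := f(\X\w_1) - f(\X\w_2)\in\R^n$ and $\v := \w_1 - \w_2$. Since $\S$ is linear, $\norm{\S f(\X\w_1) - \S f(\X\w_2)}_2^2 = \norm{\S\z}_2^2$, and by the definition of the importance sampling matrix, $\norm{\S\z}_2^2 = \sum_{j=1}^m W_j$ where $W_j := z_{i_j}^2/(m\,p_{i_j})$ and $i_1,\dots,i_m$ are the i.i.d.\ indices drawn when building $\S$. Each $W_j$ satisfies $\E[W_j] = \tfrac1m\sum_{i} p_i\,(z_i^2/p_i) = \tfrac1m\norm{\z}_2^2$, so $\E\norm{\S\z}_2^2 = \norm{\z}_2^2 = \norm{f(\X\w_1) - f(\X\w_2)}_2^2$; it therefore suffices to show $\big|\norm{\S\z}_2^2 - \norm{\z}_2^2\big| \le \eps L^2\norm{\X\v}_2^2$ with probability $1-\delta$, which unfolds to exactly the two stated inequalities.

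Two structural facts drive the estimate. First, the $L$-Lipschitz property of $f$ gives the \emph{pointwise} bound $z_i^2 = \big(f(\langle\x_i,\w_1\rangle) - f(\langle\x_i,\w_2\rangle)\big)^2 \le L^2\langle\x_i,\v\rangle^2 = L^2[\X\v]_i^2$; summing over $i$ yields $\norm{\z}_2^2 \le L^2\norm{\X\v}_2^2 =: D$. Second, the leverage-score characterization in Definition~\ref{def:lev_scores} gives $[\X\v]_i^2 \le \tau_i(\X)\norm{\X\v}_2^2$, hence $z_i^2 \le \tau_i(\X)\,D$. Combining the second fact with $p_i = \tau_i(\X)/\sum_j\tau_j(\X)$ and $\sum_j\tau_j(\X) \le d$ shows $z_i^2/p_i \le D\sum_j\tau_j(\X) \le dD$, so $W_j \le dD/m$ almost surely; and the variance is controlled by $\sum_{j}\Var(W_j) \le \sum_j\E[W_j^2] = \tfrac1m\sum_i z_i^4/p_i \le \tfrac1m\sum_i z_i^2\cdot dD = \tfrac{dD}{m}\norm{\z}_2^2 \le \tfrac{dD^2}{m}$, where the last step again uses $\norm{\z}_2^2\le D$.

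With the almost-sure bound $M = dD/m$ and the variance proxy $\sigma^2 = dD^2/m$, Bernstein's inequality applied with deviation $t = \eps D = \eps L^2\norm{\X\v}_2^2$ gives $\Pr\big[\,\big|\norm{\S\z}_2^2 - \norm{\z}_2^2\big| > t\,\big] \le 2\exp\!\big(-\Theta(m\eps^2/d)\big)$; choosing $m \ge 3d\log(2/\delta)/\eps^2$ makes this at most $\delta$ after tracking the absolute constants in the Bernstein exponent (the factor $\sigma^2 + Mt/3 \le (dD^2/m)(1+\eps/3)$ is where the constant $3$ appears). I would note explicitly that no union bound or $\eps$-net is needed here, since the claim is for a single fixed pair $(\w_1,\w_2)$; the net argument will enter only when a uniform version over all $\w$ is later required.

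The one subtle point, and the place where the argument could easily lose a factor of $d$, is the variance bound. If one only used $z_i^2 \le \tau_i(\X)D$ and $\sum_i\tau_i(\X)\le d$ to conclude $\norm{\z}_2^2 \le dD$, the variance proxy would degrade to $d^2D^2/m$ and the sample complexity would blow up to $m = \tilde O(d^2/\eps^2)$. Avoiding this requires keeping \emph{both} bounds in play simultaneously: the per-coordinate Lipschitz/leverage-score bound $z_i^2 \le \tau_i(\X) D$ (which yields the almost-sure bound $M$) and the aggregate bound $\norm{\z}_2^2 = \sum_i z_i^2 \le L^2\sum_i[\X\v]_i^2 = D$ coming from $\sum_i[\X\v]_i^2 = \norm{\X\v}_2^2$ (which keeps $\sigma^2$ small). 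Everything else — assembling the pieces into Bernstein and simplifying — is routine.
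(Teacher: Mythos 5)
Your proposal is correct and follows essentially the same route as the paper's proof: unbiasedness of the sampled norm, the pointwise Lipschitz bound, the leverage-score inequality $[\X\v]_i^2 \le \tau_i(\X)\|\X\v\|_2^2$ to control both the almost-sure bound and the variance, and a single application of Bernstein's inequality with $t = \eps L^2\|\X\w_1 - \X\w_2\|_2^2$. The variance bookkeeping you flag as the subtle point (keeping one aggregate factor and one leverage-bounded factor to land at $dD^2/m$ rather than $d^2D^2/m$) is exactly how the paper's computation proceeds.
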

By combining Lemma \ref{prop:bern} with an $\eps$-net argument, we can extend the bound to obtain a one-sided guarantee that involves the distance between $f(\X\w^*)$ and $f(\X\w)$ for  \emph{all} $\w$ within a ball of radius $R$. 
\begin{lemma}\label{lemm:net_construction}
	Let $f$, $\X$, and $\y$ be as in Theorem \ref{thm:main_result}. Let $\w^* = \argmin_\w \|f(\X\w) - \y\|_2^2$, $R$ be a fixed radius and $C_1,C_2>0$ be fixed constants. Let $\S\in \R^{m\times n}$ be an {importance sampling matrix} as in Lemma \ref{prop:bern}. As long as $m \geq  c \frac{d^2\log(1/\eps)}{\eps^4}$ for $\eps < 1$ and a fixed constant $c$ then with probability $49/50$, for  all $\wh \in \mathcal{B}^d(R)$,
	\begin{align*}
		\norm{f(\X\wh) - f(\X\w^*)}_2^2 &\le 4 \cdot \norm{\S f(\X\wh) - \S f(\X\w^*)}_2^2 
  +  \eps^2L^2 R^2  +  \eps^2 L^2 \|\X\w^*\|_2^2.
	\end{align*}
\end{lemma}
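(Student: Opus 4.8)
The plan is an $\eps$-net argument over $\mathcal{B}^d(R)$: apply Lemma~\ref{prop:bern} to the finitely many net points paired with $\w^*$, and then use the Lipschitz property of $f$ to extend the resulting two-sided estimate to all $\wh \in \mathcal{B}^d(R)$ as the claimed one-sided bound. As noted in the preliminaries, by leverage-score invariance we may assume $\X$ has orthonormal columns, so $\|\X\v\|_2 = \|\v\|_2$ for all $\v$; recall also the deterministic inequality $[\X\v]_i^2 \le \tau_i(\X)\,\|\X\v\|_2^2$ from Definition~\ref{def:lev_scores}. Fix $\eta \in (0,R)$ (chosen below) and let $\mathcal{N} \subseteq \mathcal{B}^d(R)$ be an $\eta$-net of cardinality $N \le (3R/\eta)^d$.

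First I would apply Lemma~\ref{prop:bern} with accuracy parameter $\eps' := \eps^2/16$ and failure probability $1/(50N)$ to each of the $N$ pairs $(\w,\w^*)$, $\w \in \mathcal{N}$. A union bound shows that, as long as $m \ge 3d\log(100N)/\eps'^2 = \Theta\!\big(d^2 \log(R/\eta)\,/\,\eps^4\big)$, with probability at least $49/50$ we have simultaneously for every $\w \in \mathcal{N}$
\[
  \|f(\X\w) - f(\X\w^*)\|_2^2 \;\le\; \|\S f(\X\w) - \S f(\X\w^*)\|_2^2 + \eps' L^2 \|\X\w - \X\w^*\|_2^2 ,
\]
and here $\|\X\w - \X\w^*\|_2^2 \le 2R^2 + 2\|\X\w^*\|_2^2$ since $\|\X\w\|_2 = \|\w\|_2 \le R$.

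Now fix an arbitrary $\wh \in \mathcal{B}^d(R)$ and let $\w \in \mathcal{N}$ be a nearest net point, $\|\wh - \w\|_2 \le \eta$. Applying $(a+b)^2 \le 2a^2 + 2b^2$ twice, together with the Lipschitz bound $\|f(\X\wh) - f(\X\w)\|_2 \le L\eta$, reduces $\|f(\X\wh) - f(\X\w^*)\|_2^2$ to $4\|\S f(\X\wh) - \S f(\X\w^*)\|_2^2$ plus additive terms proportional to $\eps' L^2(R^2 + \|\X\w^*\|_2^2)$, $L^2\eta^2$, and $\|\S(f(\X\wh) - f(\X\w))\|_2^2$. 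The crucial estimate is
\[
  \|\S(f(\X\wh) - f(\X\w))\|_2^2 \;\le\; L^2 d\, \eta^2 ,
\]
which holds deterministically: entrywise $|f(\X\wh)_i - f(\X\w)_i|^2 \le L^2 [\X(\wh - \w)]_i^2 \le L^2 \tau_i(\X)\,\eta^2$, and since $\S$ scales the $i$-th coordinate by $1/(m p_i)$ with $\tau_i(\X)/p_i = \sum_j \tau_j(\X) \le d$, summing $m$ such terms gives at most $L^2 d\eta^2$. Choosing $\eta = \Theta(\eps R/\sqrt{d})$ and $\eps' = \eps^2/16$ makes the $O(d)\eta^2 L^2$ and $\eps' L^2 R^2$ contributions together at most $\eps^2 L^2 R^2$, and the $\eps' L^2\|\X\w^*\|_2^2$ contribution at most $\eps^2 L^2\|\X\w^*\|_2^2$; summing the two triangle-inequality splits then yields the claimed inequality. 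With these choices $\log(R/\eta) = O(\log(d/\eps))$, which is absorbed into the constant $c$ in the stated bound $m = O(d^2 \log(1/\eps)/\eps^4)$.

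The main obstacle is the uniform (over all off-net $\wh$) control of $\|\S(f(\X\wh) - f(\X\w))\|_2^2$. A naive operator-norm bound on $\S$ is useless, since the reweighting factors $1/(m p_i)$ can be arbitrarily large when some leverage scores are tiny; one genuinely needs that the leverage-score inequality controls each coordinate of $\X(\wh-\w)$, that this coordinatewise control passes through the Lipschitz nonlinearity, and that the surviving $\tau_i(\X)$ factors cancel exactly against the sampling weights. The remaining work — choosing $\eps' \sim \eps^2$ (the source of the $\eps^4$ in the sample complexity) and the net width, and the two triangle-inequality splits — is routine bookkeeping.
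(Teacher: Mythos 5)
Your proposal is correct and follows the same overall skeleton as the paper's proof: an $\eps$-net on $\mathcal{B}^d(R)$, Lemma~\ref{prop:bern} invoked with error parameter $\Theta(\eps^2)$ and failure probability $1/(50|N|)$ per net point, a union bound, and two applications of $\|\mathbf{a}+\mathbf{b}\|_2^2 \le 2\|\mathbf{a}\|_2^2 + 2\|\mathbf{b}\|_2^2$ to pass from the net to an arbitrary $\wh$. The one genuinely different step is how you control the off-net remainder $\|\S(f(\X\wh) - f(\X\z^*))\|_2^2$. The paper pulls the Lipschitz constant through $\S$ entrywise to get $L^2\|\S\X(\wh-\z^*)\|_2^2$ and then invokes the subspace embedding of Lemma~\ref{lem:subspace} (a second high-probability event) to bound this by $2L^2\|\X(\wh-\z^*)\|_2^2 \le 2\eps^2 L^2 R^2$, which lets it use a coarse $(\eps R)$-net of size $(1+2/\eps)^d$. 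You instead prove the fully deterministic bound $\|\S(f(\X\wh)-f(\X\z^*))\|_2^2 \le L^2 d\,\eta^2$ directly from the variational characterization of leverage scores (the $\tau_i$ in the numerator cancels against the sampling weight $1/p_i$), which is arguably cleaner since it removes one probabilistic event from the union bound, but forces the finer net width $\eta = \Theta(\eps R/\sqrt{d})$. The only cost is that $\log|N|$ becomes $O(d\log(d/\eps))$ rather than $O(d\log(1/\eps))$, so your sample complexity is $O(d^2\log(d/\eps)/\eps^4)$; the extra $\log d$ is not a constant and is not literally absorbed into $c$ as you claim, so strictly you prove the lemma with a marginally larger $m$ than stated. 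This is a negligible quantitative slack, not a gap in the argument; the rest of your bookkeeping (the choice $\eps' = \eps^2/16$, the bound $\|\X\z - \X\w^*\|_2^2 \le 2R^2 + 2\|\X\w^*\|_2^2$ under the orthonormality reduction, and the final accounting of constants) is sound.
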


Proofs of Lemma \ref{prop:bern} and \ref{lemm:net_construction} are deferred to Appendix \ref{sec:app}.

\subsection{Proof of Main Result}
With Claim \ref{claim:trade_off} and Lemma \ref{lemm:net_construction} in place, we are now ready to prove our main result.
\begin{proof}[Proof of Theorem \ref{thm:main_result}.]
	First note that, without loss of generality, we can assume that $\X$ has orthonormal columns. In particular, if $\X$ is not orthonormal, we can write it as $\X = \Q\mathbf{R}$ where $\Q \in \R^{n\times \rank(\X)}$ has orthonormal columns and $\mathbf{R}$ is a square full-rank matrix. The leverage scores of $\Q$ are equal to those of $\X$. Moreover, 
	any solution $\wh$ to \eqref{eq:contrained_sketch_and_solve} has a corresponding solution $\mathbf{R}\wh$ to the minimization problem if $\X$ were replaced by $\Q$. So solving the above problem is equivalent to first explicitly orthogonalizing $\X$ and solving the same problem. 
	
	Next, we use the elementary fact that for any vectors $\mathbf{a}$ and $\mathbf{b}$, 
 $\|\mathbf{a} + \mathbf{b}\|_2^2 \leq 2 \|\mathbf{a}\|_2^2 + 2 \|\mathbf{b}\|_2^2.$
 This give the bound:
	\begin{align}
		\label{eq:main_first_step}
		 \norm{f(\X\wh) - \y}_2^2 \nonumber
  &\le 2\norm{f(\X\wh) - f(\X\w^*)}_2^2 + 2\norm{f(\X\w^*) - \y}_2^2 \nonumber\\
		&\le 2\norm{f(\X\wh) - f(\X\w^*)}_2^2 + 2\cdot OPT.
	\end{align}
	We need to bound the first term. To do so, we first observe that, thanks to the constraint imposed in \eqref{eq:contrained_sketch_and_solve}, the norm of $\wh$ can be bounded, which allows us to apply Lemma \ref{lemm:net_construction}. In particular, we claim that with probability $49/50$,
	\begin{align}
		\label{eq:wh_norm_bound}
\|\wh\|_2^2 \leq \frac{100}{\eps L^2}\cdot \|\y\|_2^2.
	\end{align}
	
To see that this is the case, note that under our assumption that $\X$ is orthogonal, we have $\|\wh\|_2^2 = \|\X\wh\|_2^2$. We can bound $\|\X\wh\|_2^2$ as follows:
	\begin{align*}
		\norm{\X\wh}_2^2 &\le 2\norm{\S\X\wh}_2^2 \quad (\text{Lemma \ref{lem:subspace}})\\
		&\le 2\frac{1}{\eps \cdot L^2}\norm{\S\y}_2^2 \quad (\text{From the constraint in }  \eqref{eq:contrained_sketch_and_solve})\\
		&\le \frac{100}{\eps \cdot L^2}\norm{\y}_2^2 \quad (\text{Markov's inequality).}
	\end{align*}
In the last inequality, we used that $\E[\|\S\y\|_2^2] = \|\y\|_2^2$.

Since $\wh$ lies in $\mathcal{B}^d(R)$, where $R^2 = \frac{100}{\eps L^2}\cdot \|\y\|_2^2$, we can apply Lemma \ref{lemm:net_construction} along with Markov's inequality to conclude that, as long as $m \geq  c \frac{d^2\log(1/\eps)}{\eps^4}$:
	\begin{equation}
 \begin{aligned}
		\norm{f(\X\wh) - f(\X\w^*)}_2^2 
   & \leq 2 \norm{\S f(\X\wh) - \S f(\X\w^*)}_2^2 + 100\eps \|\y\|_2^2 + \eps^2 L^2 \norm{\X\w^*}_2^2\\
		&\leq 4 \norm{\S f(\X\wh) - \S\y}_2^2  + 4 \norm{\S f(\X\w^*) - \S\y}_2^2
  + 100\eps \|\y\|_2^2 + \eps^2 L^2 \norm{\X\w^*}_2^2 \nonumber \\
		&\le 4 \norm{\S f(\X\wh) - \S\y}_2^2  + 50 \cdot OPT  + 100\eps \|\y\|_2^2 + \eps^2 L^2 \norm{\X\w^*}_2^2. \nonumber
\end{aligned}
        \end{equation}
As in the proof of Claim \ref{claim:trade_off}, the last inequality follows with probability $49/50$ via Markov's inequality since $\E\left[\|\S f(\X\hat{\w}) -\S \y\|_2^2\right] = \| f(\X\w^*) -\y\|_2^2 = OPT$.

	Next, we apply Claim \ref{claim:trade_off} to bound $\norm{\S f(\X\wh) - \S\y}_2^2  \leq O\left(OPT + \eps L^2 \|\X\w^*\|_2^2\right)$. So overall, we conclude that for a constant $C$,
	\begin{align}
		\label{eq:last_step_basically}
		\|f(\X\hat{\w}) - \y\|_2^2 &\leq C \cdot \left(OPT + \eps L^2 \|\X\w^*\|_2^2+ \eps \|\y\|_2^2 \right). 
	\end{align}
By triangle inequality, we have that 
\begin{align*}
\|\y\|_2^2 &\leq 2OPT + 2\|f(\X\w^*)\|_2^2 \\
&\leq 2OPT + 2L^2\|\X\w^*\|_2^2.
\end{align*}
Using this fact, plugging \eqref{eq:last_step_basically} into \eqref{eq:main_first_step}, and rearranging terms yields the stated main result with probability. Union bounding overall all events assumed to hold in the proof, the result holds with probability $> 45/50 = 9/10$.
\end{proof}

\section{Experimental Results} 
\label{sec:exp} 

\begin{figure*}[t]
	\begin{subfigure}[t]{0.34\columnwidth}
\begin{tikzpicture}[scale = 0.67]

\definecolor{darkgray176}{RGB}{176,176,176}
\definecolor{green111723}{RGB}{1,117,23}
\definecolor{lightgray204}{RGB}{204,204,204}
\definecolor{navy028127}{RGB}{0,28,127}

\begin{axis}[
style = very thick,
legend cell align={left},
legend style={fill opacity=1, draw opacity=1, text opacity=1, draw=lightgray204,  nodes={scale=0.8, transform shape}},
grid style={line width=.1pt, draw=gray!20},
major grid style={line width=.1pt,draw=gray!20},
tick align=outside,
tick label style={/pgf/number format/fixed},
tick pos=left,
title={},
x grid style={darkgray176},
xlabel={Sample Sizes},
xmajorgrids,
xmin=20.25, xmax=124.75,
xtick style={color=black},
y grid style={darkgray176},
ylabel={Relative Error},
ymajorgrids,
ymin=0.0023571586087109, ymax=0.236669129951777,
ytick style={color=black}
]

\addplot [ultra thick, blue, dashed]
table {%
25 0.100925192973117
40 0.0446895562657606
60 0.0283669392366791
80 0.0181271816077063
100 0.0163842432900762
120 0.0130077027606685
};
\addlegendentry{Leverage score sampling}
\addplot [ultra thick, red, dotted]
table {%
25 0.22601858579982
40 0.0829142346512896
60 0.0396788361775591
80 0.0270738970960071
100 0.022419913252031
120 0.0225479424385879
};
\addlegendentry{Uniform sampling}
\end{axis}

\end{tikzpicture}
		\caption{$\relu(.4 x_1 + .4 x_2 -.4)$, uniform data}
	\end{subfigure}
	\begin{subfigure}[t]{0.32\columnwidth}

\begin{tikzpicture}[scale = 0.67]

\definecolor{darkgray176}{RGB}{176,176,176}
\definecolor{green111723}{RGB}{1,117,23}
\definecolor{lightgray204}{RGB}{204,204,204}
\definecolor{navy028127}{RGB}{0,28,127}

\begin{axis}[
style = very thick,
legend cell align={left},
legend style={fill opacity=1, draw opacity=1, text opacity=1, draw=lightgray204,  nodes={scale=0.8, transform shape}},
tick align=outside,
tick pos=left,
y tick label style={
                    /pgf/number format/.cd,
                    fixed relative,
                    },
title={},
grid style={line width=.1pt, draw=gray!20},
major grid style={line width=.1pt,draw=gray!20},
x grid style={darkgray176},
xlabel={Sample Sizes},
xmajorgrids,
xmin=7.5, xmax=62.5,
xtick style={color=black},
y grid style={darkgray176},
ylabel={},
ymajorgrids,
ymin=0.00261204948162059, ymax=0.0095381417121235,
ytick style={color=black},
]

\addplot [ultra thick, blue, dashed]
table {%
10 0.00597962557568885
20 0.00404941364163921
40 0.00300233459293011
50 0.00300233459293011
60 0.00292687185573436
};
\addlegendentry{Leverage score sampling}
\addplot [ultra thick, red, dotted]
table {%
10 0.00922331933800973
20 0.00469915742453822
40 0.0034002763146532
50 0.00327380958771711
60 0.00300233459293011
};
\addlegendentry{Uniform sampling}
\end{axis}

\end{tikzpicture}
		\caption{$(.8 x_1 + .1 x_2 -.1)^2$, uniform data}
	\end{subfigure}
	\begin{subfigure}[t]{0.32\columnwidth}
\begin{tikzpicture}[scale = 0.67]

\definecolor{darkgray176}{RGB}{176,176,176}
\definecolor{green111723}{RGB}{1,117,23}
\definecolor{lightgray204}{RGB}{204,204,204}
\definecolor{navy028127}{RGB}{0,28,127}

\begin{axis}[
style = very thick,
legend cell align={left},
legend style={fill opacity=1, draw opacity=1, text opacity=1, draw=lightgray204,nodes={scale=0.8, transform shape}},
tick align=outside,
tick pos=left,
tick label style={/pgf/number format/fixed},
title={},
x grid style={darkgray176},
xlabel={Sample Sizes},
xmajorgrids,
xmin=7.5, xmax=62.5,
xtick style={color=black},
y grid style={darkgray176},
ylabel={},
ymajorgrids,
ymin=-0.00154717183531562, ymax=0.0485712581645178,
ytick style={color=black}
]
\addplot [ultra thick, blue, dashed]
table {%
10 0.0152941842618617
20 0.00346075410938869
30 0.00197071898393941
40 0.00198739979550974
50 0.00151357874697398
60 0.00073093861922226
};
\addlegendentry{Leverage score sampling}
\addplot [ultra thick, red, dotted]
table {%
10 0.0462931477099799
20 0.0076300300543609
30 0.00388824118326503
40 0.00239731001001316
50 0.00195835134227851
60 0.00199501534690489
};
\addlegendentry{Uniform sampling}
\end{axis}

\end{tikzpicture}
		\caption{$\relu(.7 x_1 + .1 x_2 -.4)$, Gaussian data}
	\end{subfigure}
	\caption{Median relative error for learning two-dimensional single neuron models involving a ReLU non-linearity for a synthetic $\X$ with rows selected from a uniform or Gaussian distribution. The target vector $\y$ was obtained by corrupting the ground truth with Gaussian noise with variance $.05$. In all cases our active leverage score sampling method outperforms naive uniform sampling. As expected, the improvement is more significant when a small number of samples are taken.}
	\label{fig:synth}
\end{figure*}
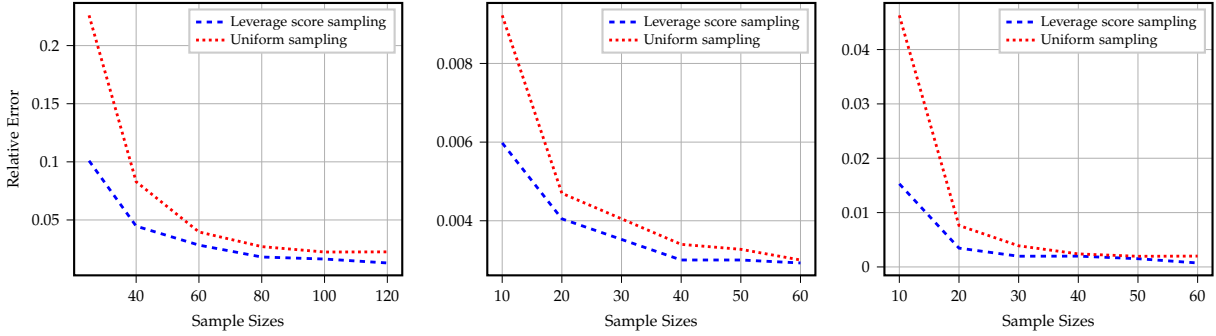

To complement our theoretical analysis, we also provide experimental results showing the promise of leverage score sampling for actively learning singe neuron models. 
We consider both synthetic data problems, as well as several tests derived from differential equation approximation problems. We focus on sample efficiency -- i.e.,  how many samples from $\y$ are required to obtain a good approximation to $\min_{\w}\|f(\X\w) - \y\|_2^2$. Computational efficiency is not a major concern: as discussed, in typical applications of single-neuron learning in computational science, collecting samples requires numerically solving a differential equation, which dominates any runtime cost of the actual fitting procedure \citep{AdcockBrugiapagliaDexter:2022}. Moreover, leverage score sampling has already proven an efficient active learning tool for linear function classes \citep{CohenDeVore:2015}. 

Overall, for all problems tested, our experiments show that leverage score sampling obtains a better sample/accuracy trade-off than the standard approach of choosing sample points uniformly at random from $\X$. 

\noindent\textbf{Synthetic Data.} For the synthetic data problems, we set $\X$ to contain $10^5$ random vectors drawn from either a two dimensional Gaussian distribution (``Gaussian data''), or uniformly from the two-dimensional box $[-1,1]^2$ (``uniform data''). We also add a column of all $1$'s to $\X$, which corresponds to including a bias term in the single-neuron model. We select a ground truth $\w^*$, and create $\y = f(\X\w^*) + \g$, where $\g$ is a vector of mean-centered Gaussian noise and $f$ is the $\relu$ non-linearity. We then compute $\wh$ by subsampling data via leverage scores (as in Algorithm \ref{alg:active-lean}) and minimizing $\|\S f(\X\w) - \S\y\|_2^2$ over our subsampled data.

In our experiments we found that the constraint in Eq.~\ref{eq:contrained_sketch_and_solve} could be dropped without hurting the performance of leverage score sampling. For these low-dimensional synthetic problems, we simply used brute force search to optimize weights to ensure that a true minimum was found. 
 We then run 100 trials each for various subsample sizes, and report median relative error: ${\|f(\X\w^*) - \y\|_2^2}/{\|\y\|_2^2}$.  As show in Figure \ref{fig:synth}, leverage scores sampling outperforms uniform sampling, especially for a relatively small number of samples. As expected, for a large number of samples, both methods eventually perform comparably, as both will obtain a $\wh$ very close to the optimal $\w^*$.

\noindent\textbf{Test Problems.} We consider three test problems involving the approximation of various Quantities of Interest (QoI's) for three parametric differential equations: a damped harmonic oscillator, the heat equation, and the steady viscous Burger's equation. 
	\begin{figure*}[!htb]
	\centering
	\begin{subfigure}[t]{0.33\columnwidth}
		\includegraphics[width=.9\columnwidth]{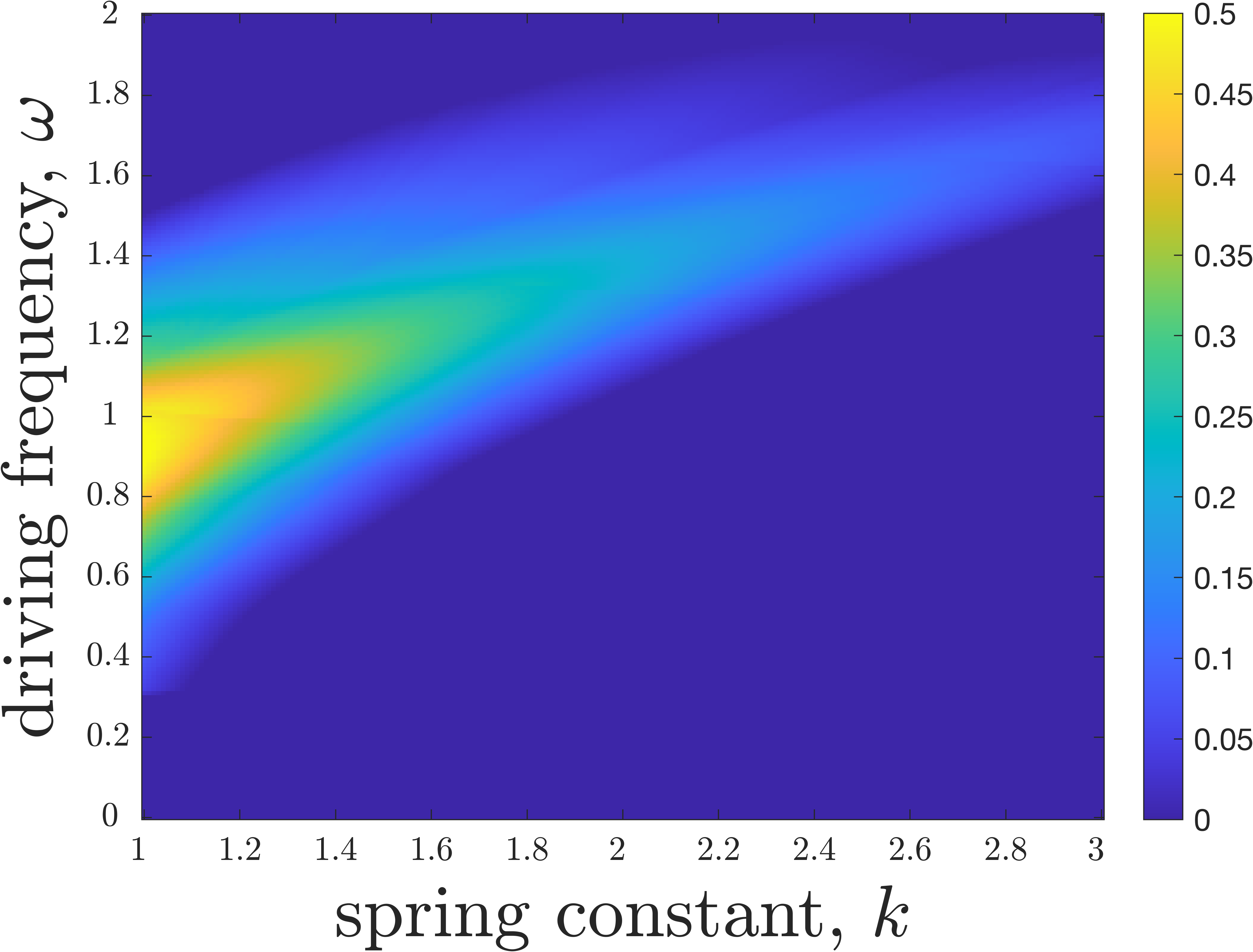}
		\caption{True Quantity of Interest.}
	\end{subfigure}\hfill
	\begin{subfigure}[t]{0.33\columnwidth}
		\includegraphics[width=.9\columnwidth]{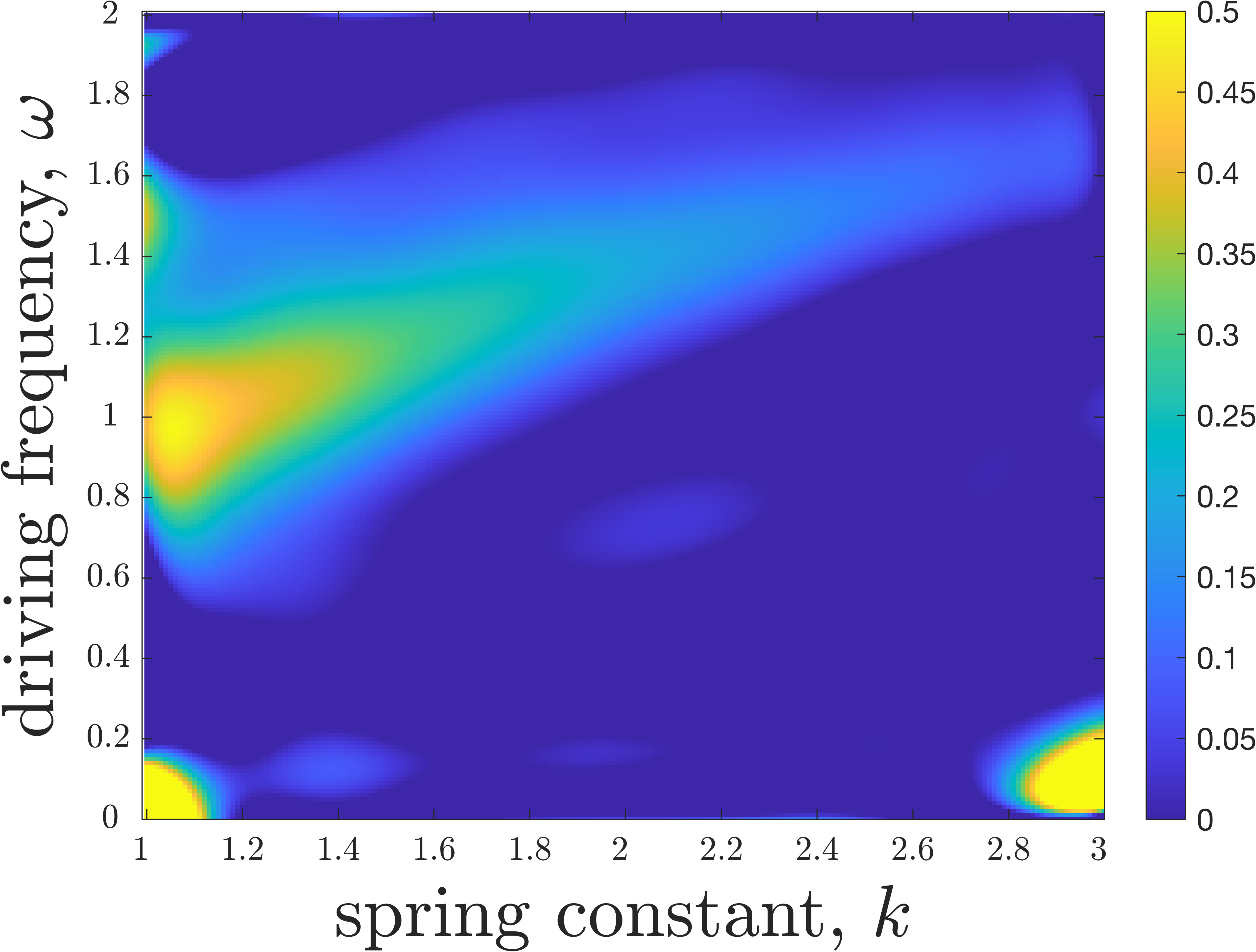}
		\caption{Approx. based on $200$ uniform samples.}
	\end{subfigure}\hfill
	\begin{subfigure}[t]{0.33\columnwidth}
		\includegraphics[width=.9\columnwidth]{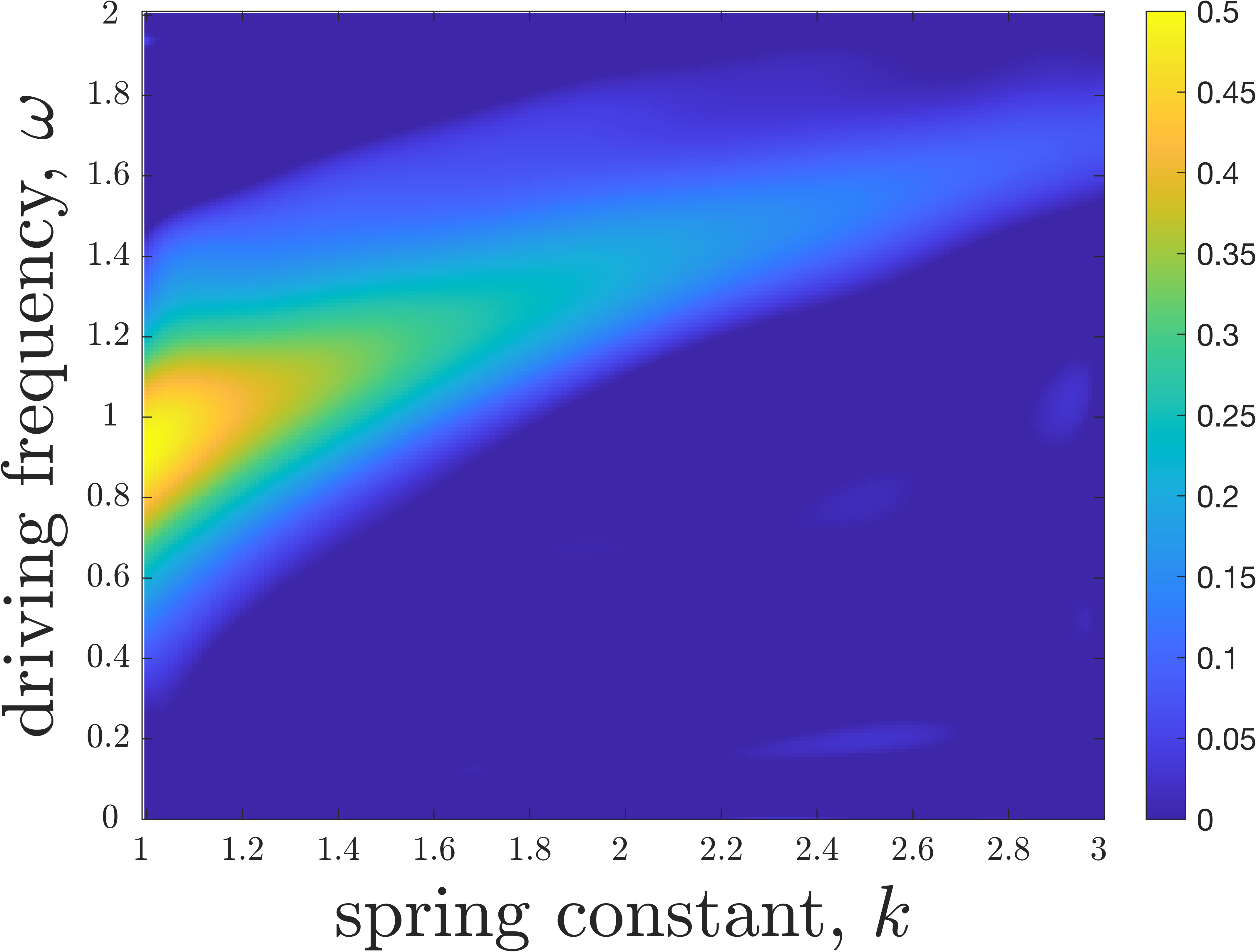}
		\caption{Approx. based on $200$  leverage samples.}
	\end{subfigure}
        \begin{subfigure}[t]{0.33\columnwidth}
		\includegraphics[width=.9\columnwidth]{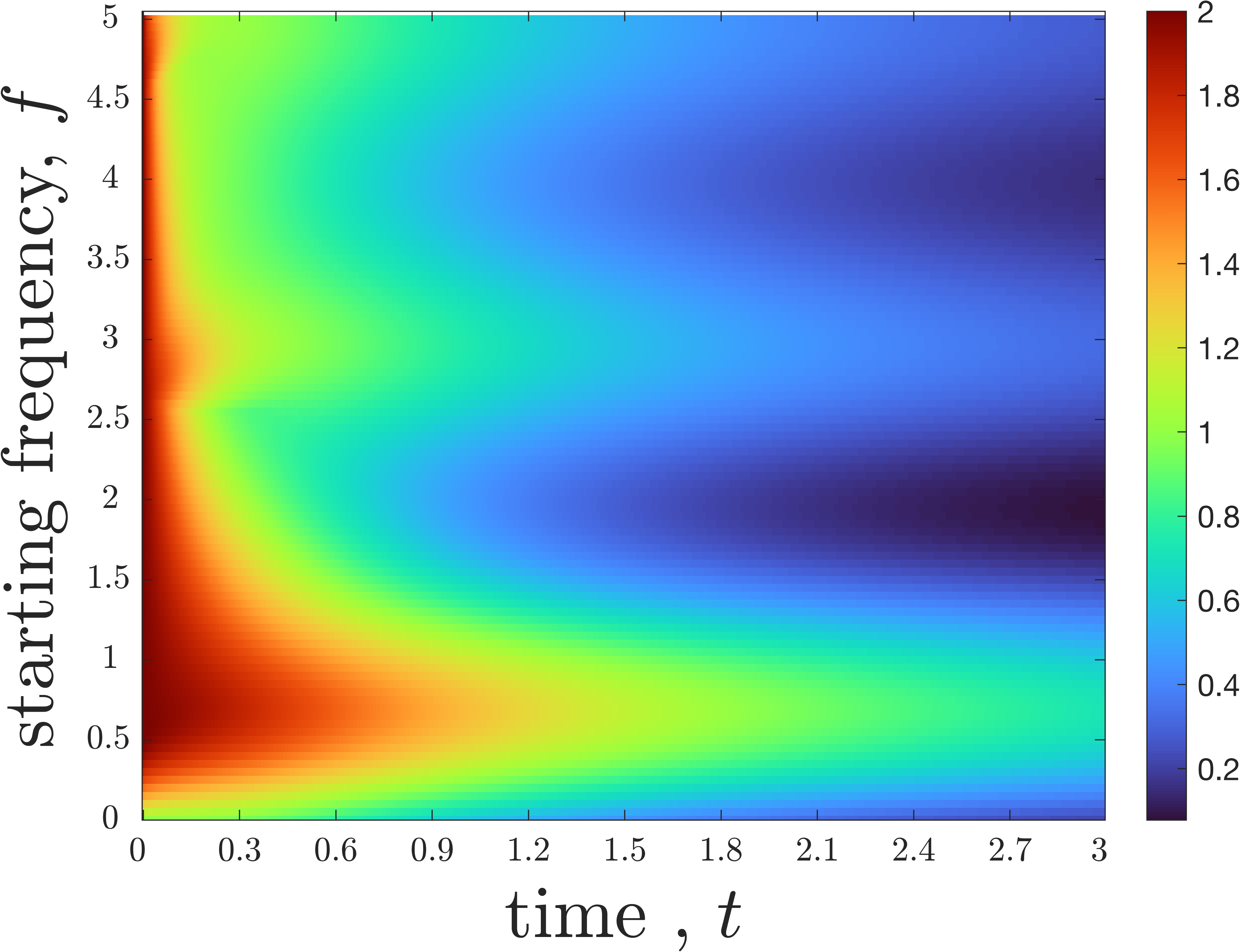}
		\caption{True Quantity of Interest.}
	\end{subfigure}\hfill
	\begin{subfigure}[t]{0.33\columnwidth}
		\includegraphics[width=.9\columnwidth]{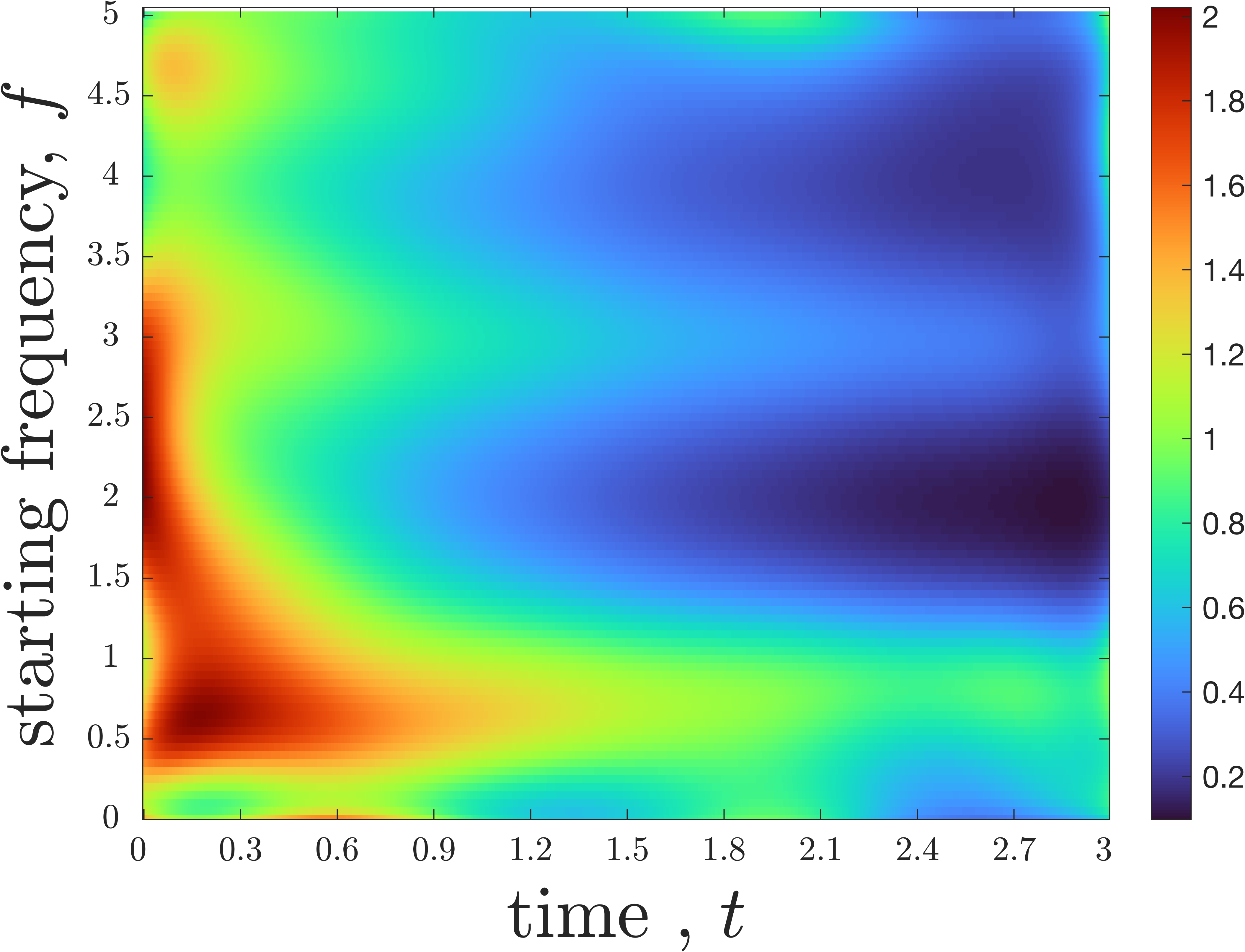}
		\caption{Approx. based on $120$ uniform samples.}
	\end{subfigure}\hfill
	\begin{subfigure}[t]{0.33\columnwidth}
		\includegraphics[width=.9\columnwidth]{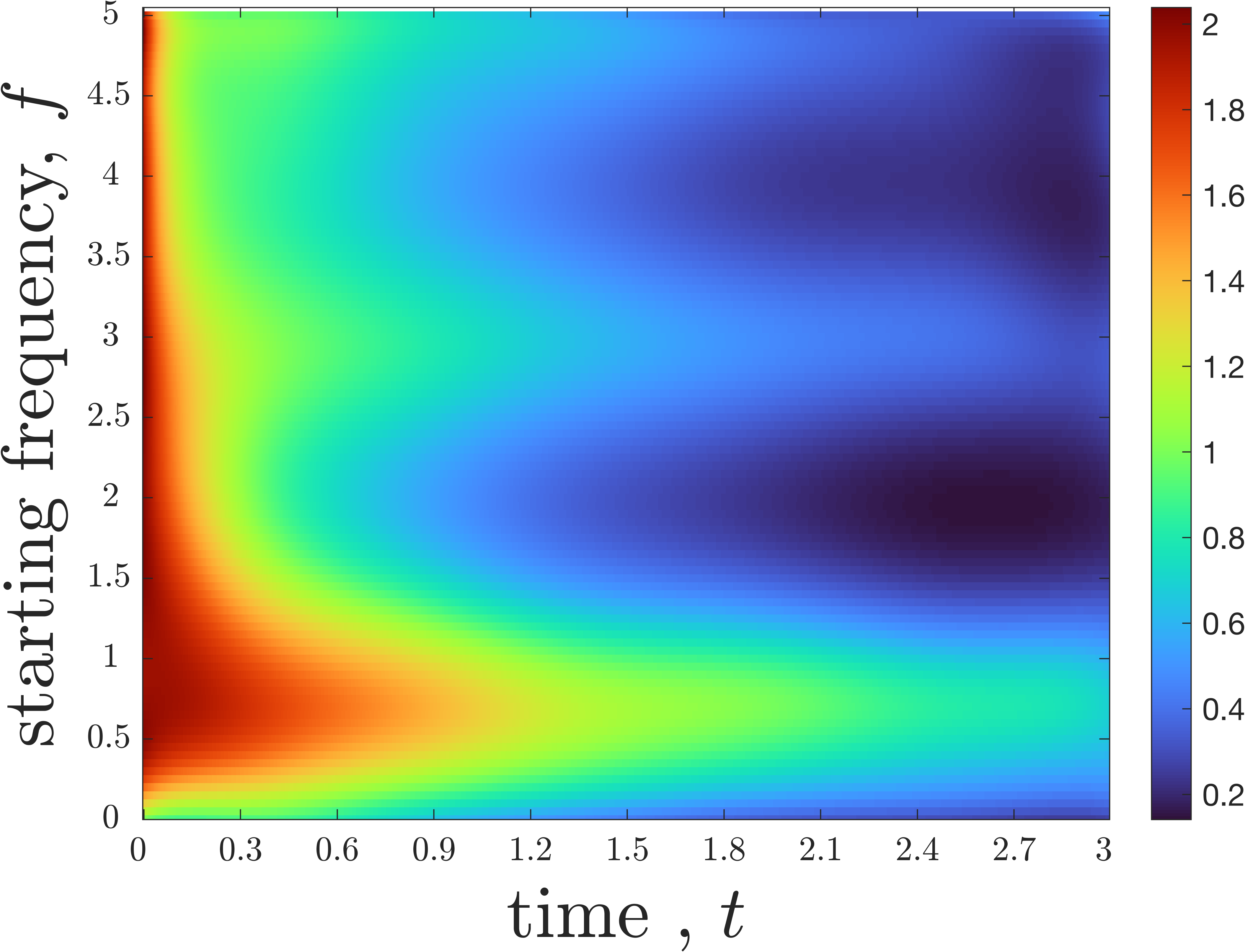}
		\caption{Approx. based on $120$ leverage samples.}
	\end{subfigure}
	\caption{
		The top $3$ images represent a plot of single neuron model fit to the maximum displacement QoI for a parametric ODE modeling a driven harmonic oscillator; and, the bottom $3$ images represent the fit of the maximum temperature QoI for the heat equation PDE with sinusoidal initial condition. Evidently, leverage score sampling  provides a better fit.}


\label{fig:qualitative_eq}
\end{figure*}

\textbf{Test 1. } We first consider a second order ODE modeling a damped harmonic oscillator
with a sinusoidal force applied, which corresponds to the parametric differential equation:
\begin{align*}
	\frac{d^2x}{dt^2}(t) + c \cdot \frac{dx}{dt}(t) + k\cdot x(t) &= f\cdot\cos(\omega t);\\  
	x(0) = x_0,\,\,\, 
	\frac{dy}{dt}(0) &= x_1.\nonumber
\end{align*}
Here, $(x,t)$ is the oscillator's space and time coordinates, and $c,k,f,\omega$ are parameters. The choice of parameters significantly impact the final solution; for example, if the frequency term $\omega$ is close to the resonant frequency of the oscillator, we expect the driving force to lead to large oscillations. We consider as our QoI the maximum oscillator displacement after 20 seconds, and the goal is to estimate this value for all $k$ and $\omega$ in the rectangle $\mathcal{U} = [1,3]\times [0,2]$. 

We choose to approximate the QoI (which is always positive) with a function of the form $\relu(p(k,\omega))$, where $p$ is bivariate polynomial with total degree $q = 9$. This is accomplished by setting $\X$ to be a Vandermonde matrix of Legendre polynomials evaluated at a grid of values on $[1,3]\times [0,2]$. $\X$ has $55 = (q+1)(q+2)/2$ columns, which is the total number of terms in a degree $q$ bivariate polynomial.  Each row in $\X$ corresponds to a different choice of parameters $k,\omega$.
We fit our single neuron model to the QoI using gradient descent with a standard adaptive step-size, again dropping the constraint in \eqref{eq:contrained_sketch_and_solve}. As shown in the top three images of Figure \ref{fig:qualitative_eq}, for a fixed number of samples, leverage score sampling leads to a visually better fit than uniform sampling. Quantitatively, we see in Figure \ref{fig:quant} that leverage score sampling gives almost an order of magnitude lower error across a wide range of sampling numbers. 

In Figure \ref{fig:sampvis}, we visualize how, for this problem, uniform samples differ from those collected using leverage scores. The Vandermonde matrix $\X$ has higher leverage score for rows corresponding to points near the boundary of $[1,3]\times [0,2]$, so more samples are taken for $(k,\omega)$ values near the boundary. The benefits of sampling near the boundary are well-known for fitting simple polynomials \citep{CohenMigliorati:2017}. It is interesting that these benefits remain  when the polynomial is combined with a non-linearity. 
\begin{figure}[b!]
	\centering
	\begin{subfigure}{0.5\columnwidth}
		\centering
		\includegraphics[width=.9\columnwidth]{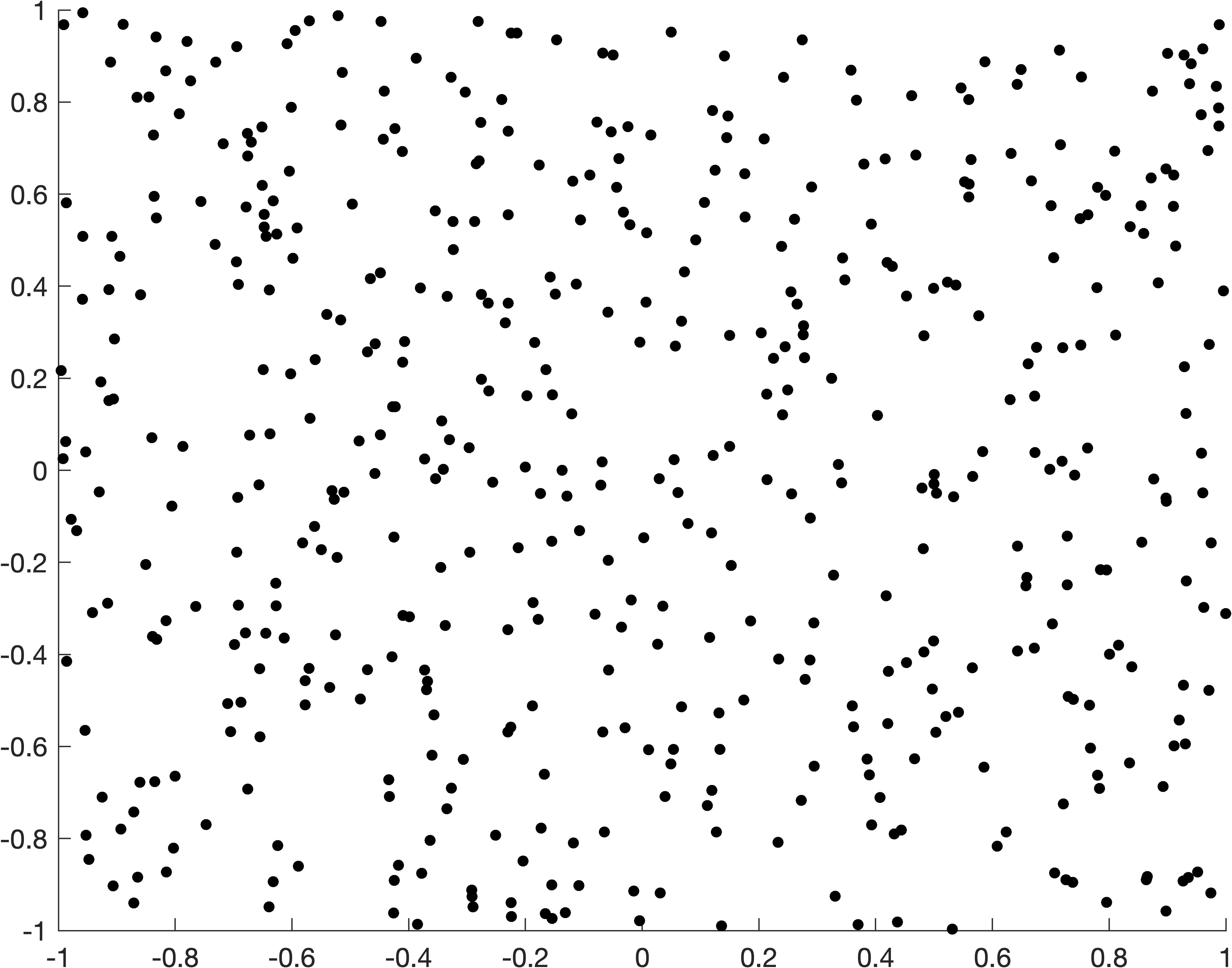}
		\caption{Uniform Random Samples}
	\end{subfigure}\hfill
	\begin{subfigure}{0.5\columnwidth}
		\centering
		\includegraphics[width=.9\columnwidth]{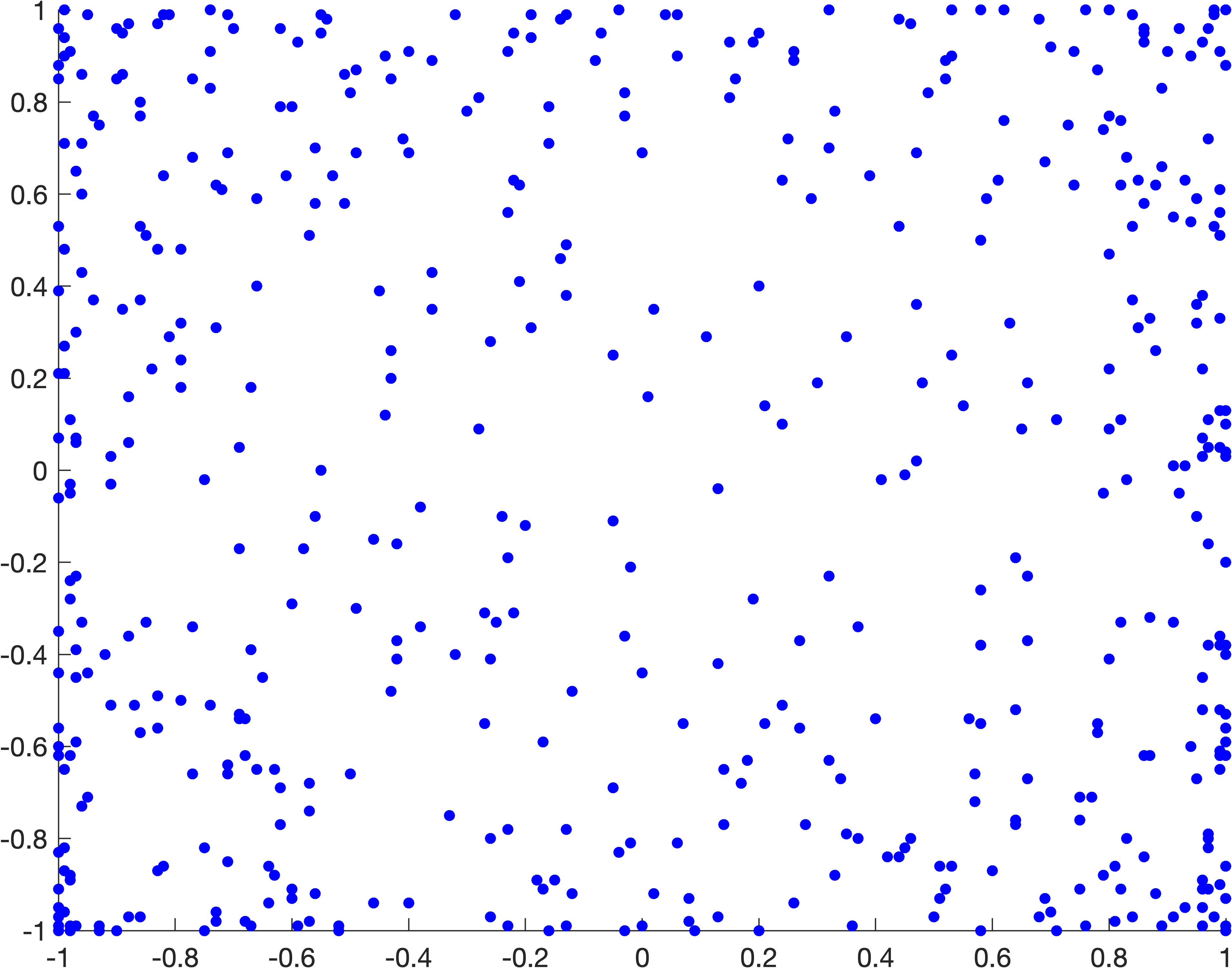}
		\caption{Leverage Score Samples}
	\end{subfigure}
	\caption{
		The plots visualize uniform vs. leverage score sampling for selecting example parameter  values from the box $[1,3]\times [0,2]$ for fitting the QoI for Test Problem 1. Our leverage score method tends to sample more heavily near the perimeter of the box to fit the single neuron model.
	}
	\label{fig:sampvis}
\end{figure}

\begin{figure*}[t!]
	\begin{subfigure}[t]{0.33\columnwidth}
		\includegraphics[width=\columnwidth]{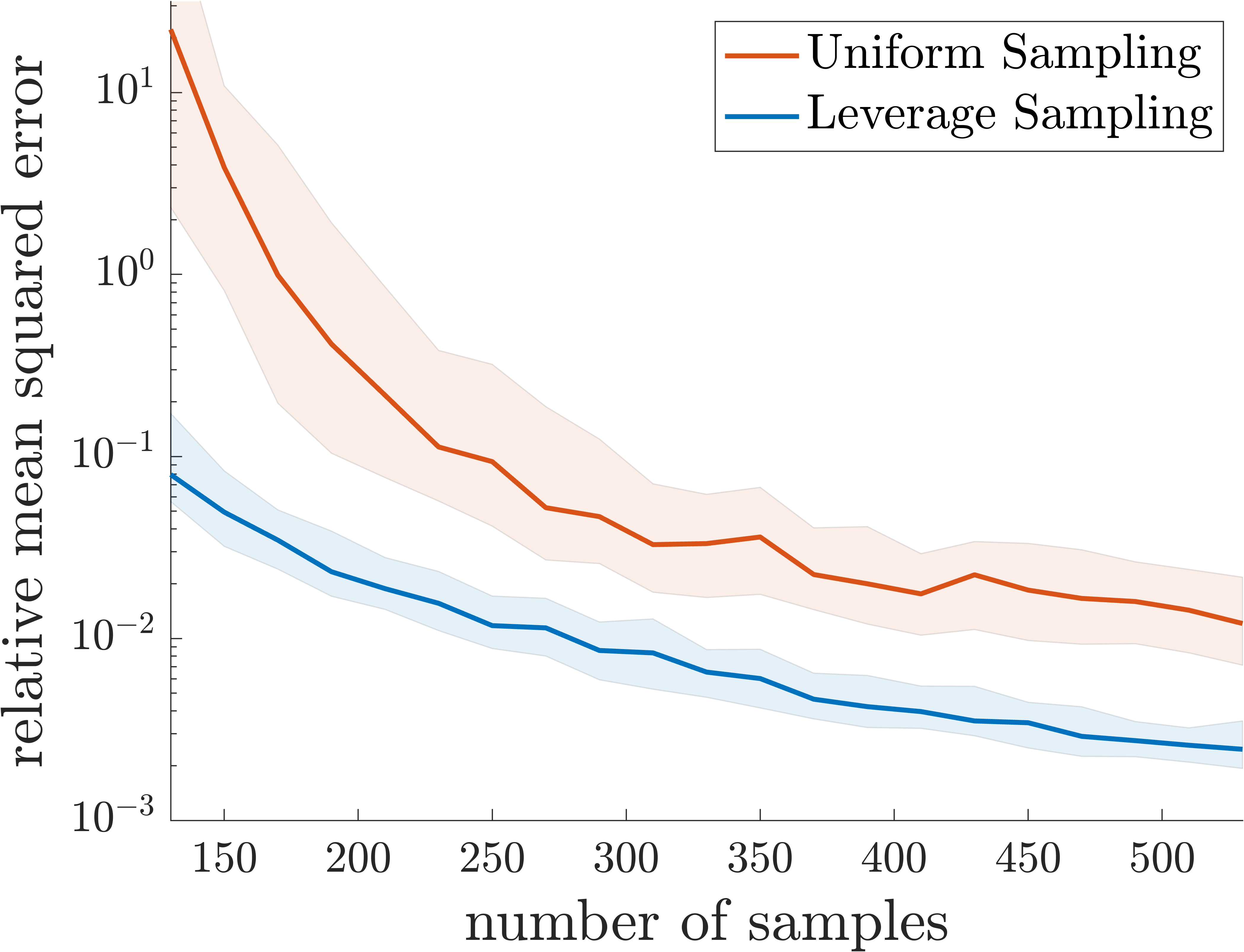}
		\caption{Damped harmonic oscillator.}
	\end{subfigure}
	\begin{subfigure}[t]{0.33\columnwidth}
	\includegraphics[width=\columnwidth]{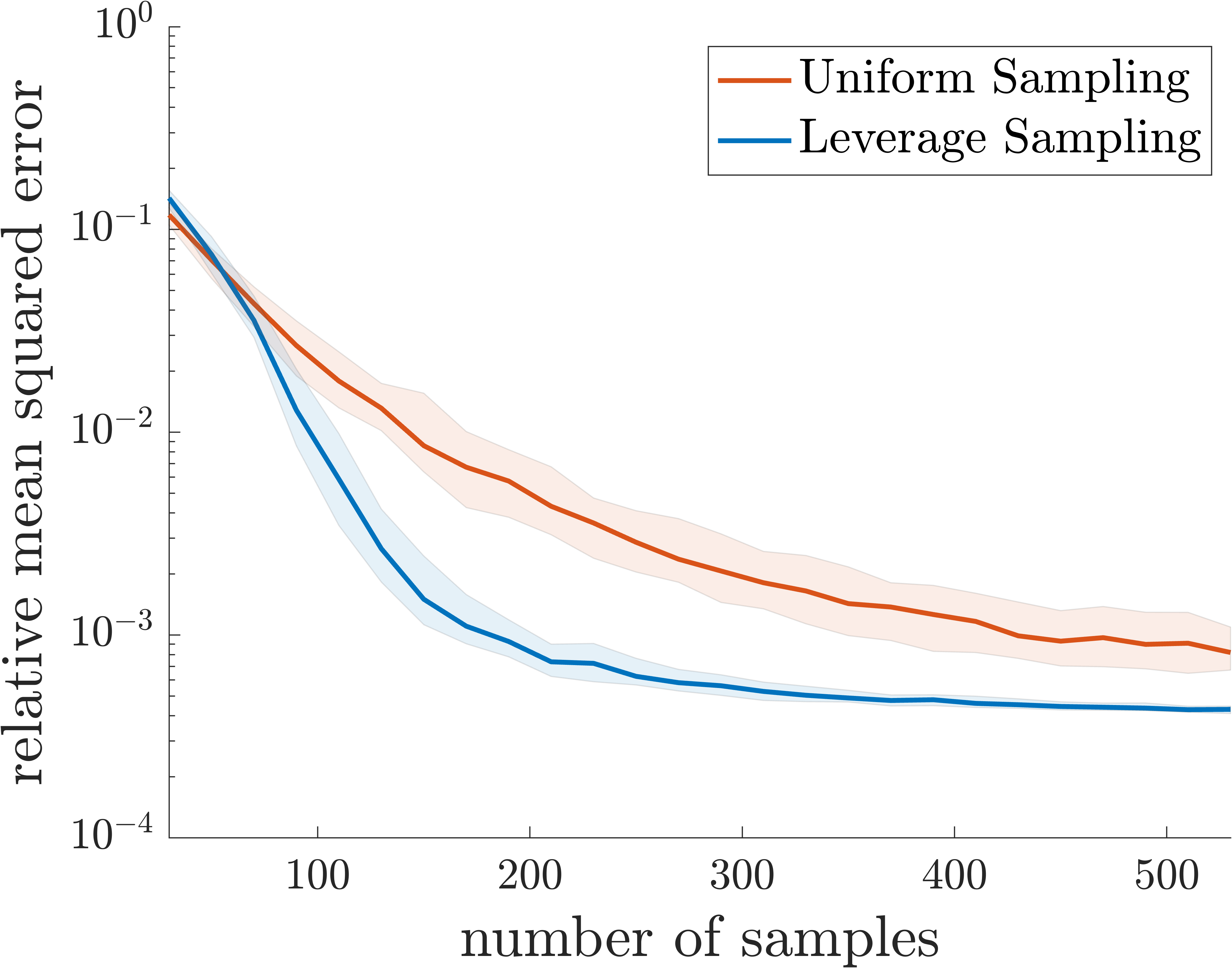}
	\caption{Heat equation.}
	\end{subfigure}
	\begin{subfigure}[t]{0.33\columnwidth}
		\includegraphics[width=\columnwidth]{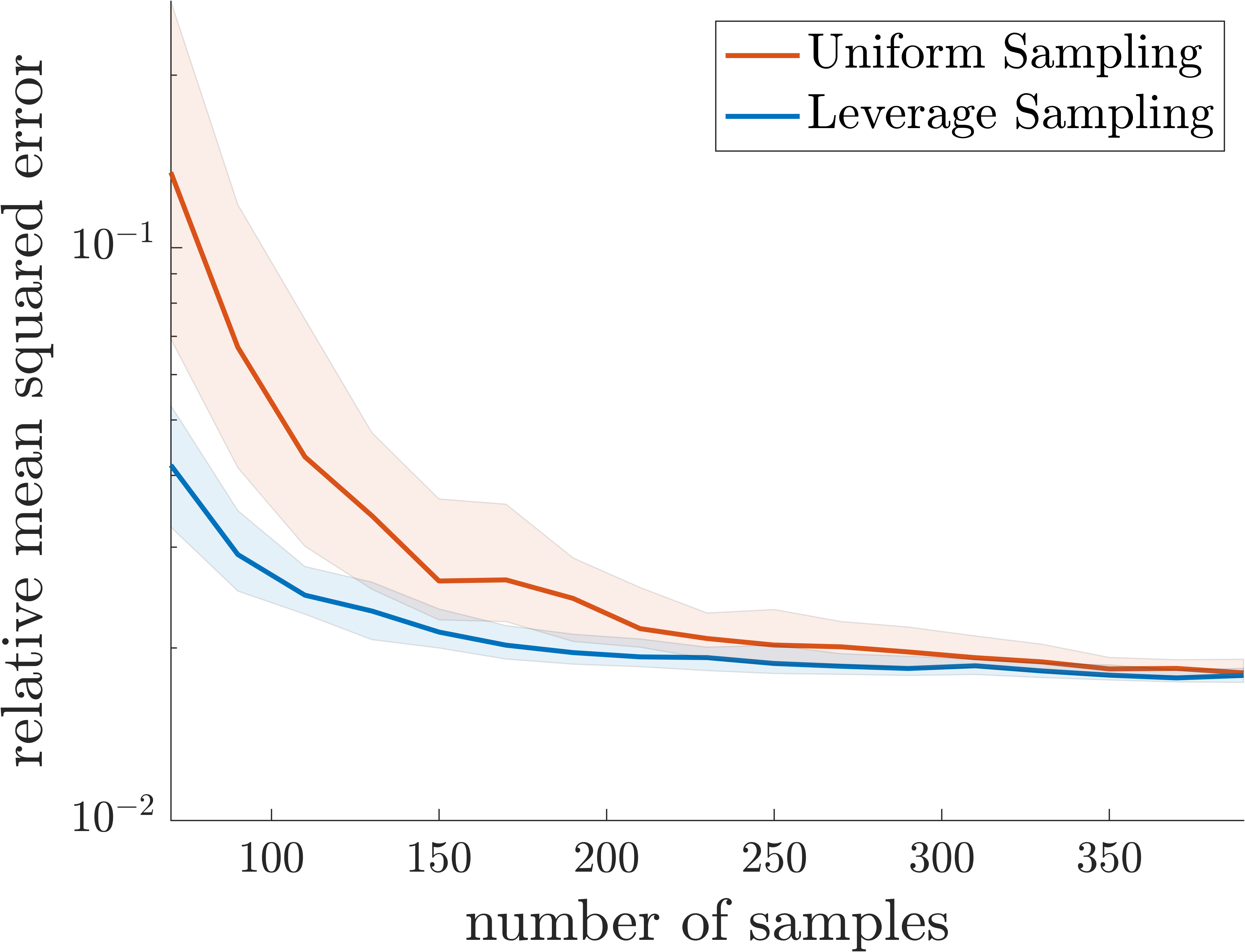}
		\caption{Steady viscous Burger's equation.}
	\end{subfigure}

	\caption{
		Sample complexity vs. relative error $\|f(\X\wh) - \y\|_2^2/\|\y\|_2^2$ for fitting the QoI's specified for our three test problems. All experiments were run for 100 trials per sample value and we plot the median error and interquartile range.
	}
	\label{fig:quant}
\end{figure*}

\textbf{Test 2.} We consider the 1-dimensional heat equation for values of $x\in [0,1]$ with a time-dependent boundary equation and sinusoidal initial condition. This is modeled by the partial differential equation:
\begin{align*}
	\pi \frac{\partial u}{\partial t} = \frac{\partial^2 u}{\partial x^2},\,\,\,
	\mu(0,t) &= 0,\,\,\,  \mu(x,0) = \sin(\omega \pi x)\\
	\pi e^{-t} + \frac{\partial u (1,t)}{\partial t} &= 0
\end{align*}
As our QoI we consider the maximum temperate over all values of $x$ for times $t\in [0,3]$ and frequencies $\omega \in [0,5]$. This leads to a highly varied QoI surface, which we again choose to fit with a model of the form $f(p(t,\omega))$.  We let $p$ be a degree $q = 11$ bivariate polynomial, so $\X$ has $78$ columns. For this problem we choose $f(a) = e^a$ to be the exponential function; otherwise the experimental setup is identical to Test Problem 1. Despite the fact that this non-linearity is \emph{not Lipschitz}, we again see visually better performances of leverage score sampling for a fixed number of samples in  the bottom three plots Figure \ref{fig:qualitative_eq}, and quantitatively better error in Figure \ref{fig:quant}. This result suggests our leverage score based active learning method may be robust to non-linearities that are just ``locally'' instead of globally Lipschitz. 

\textbf{Test 3. } Finally, we consider steady state viscous Burger's equation given by the following PDE:
\begin{align*}
    &u\cdot \frac{du}{dx} = \nu \cdot \frac{d^2 u}{d x^2}, \,\,\, u(a) = \alpha, u(b) = \beta, 
\end{align*}
where $u(x)$ is defined over the interval $x\in [a,b]$, $\nu > 0$ is the viscosity parameter, and $\alpha$ and $\beta$ are boundary parameters. We consider the point at which the solution changes its sign as the quantity of interest. It is experimentally known that this QoI is particularly sensitive to the choice of $\alpha$ and $\beta$ and not to the viscosity. Therefore, we fix $\nu = 0.1$, $[a,b] = [-1,1]$, and vary $\alpha,\beta \in [0.8,1.2] \times [-1.2,-0.8]$. We subtract the QoI obtained for these parameters by the minimum value to ensure that the function is always positive and again fit with a single neuron model of the form $\relu(p(\alpha,\beta))$, where $p$ has total degree $7$. Results in Figure \ref{fig:quant} align with the previous test problems: leverage score sampling shows a clear improvement over uniform sampling. For this problem, the improvement was less significant for a larger number of samples, suggesting that the model was simple enough that both the uniform and leverage score methods were able to eventually obtain a near-optimal fit.

\section{Discussion and Future Work}
 \label{sec:limitations}
 We believe our main theoretical result can be improved in a number of ways. 
 Most importantly, an ideal result would obtain a near-linear dependence on $d$ instead of a dependence on $d^2$, mirroring the $O(d\log d)$ sample complexity obtained by leverage score sampling for the active linear regression problem. The $d^2$ dependence is an inherent artifact of our $\eps$-net analysis; possible approaches to improve this include appealing to a more careful net construction, as in \cite{MuscoMuscoWoodruff:2022}, or more directly reducing to matrix concentration, as was done in recent work to obtain a near-linear dependence for a related problem involving $\ell_1$ embeddings of vectors transformed by Lipschitz non-linearities \citep{MaiRaoMusco:2021}.  
 
 \begin{figure}[hbt!]
 \centering
 	\includegraphics[scale=.3]{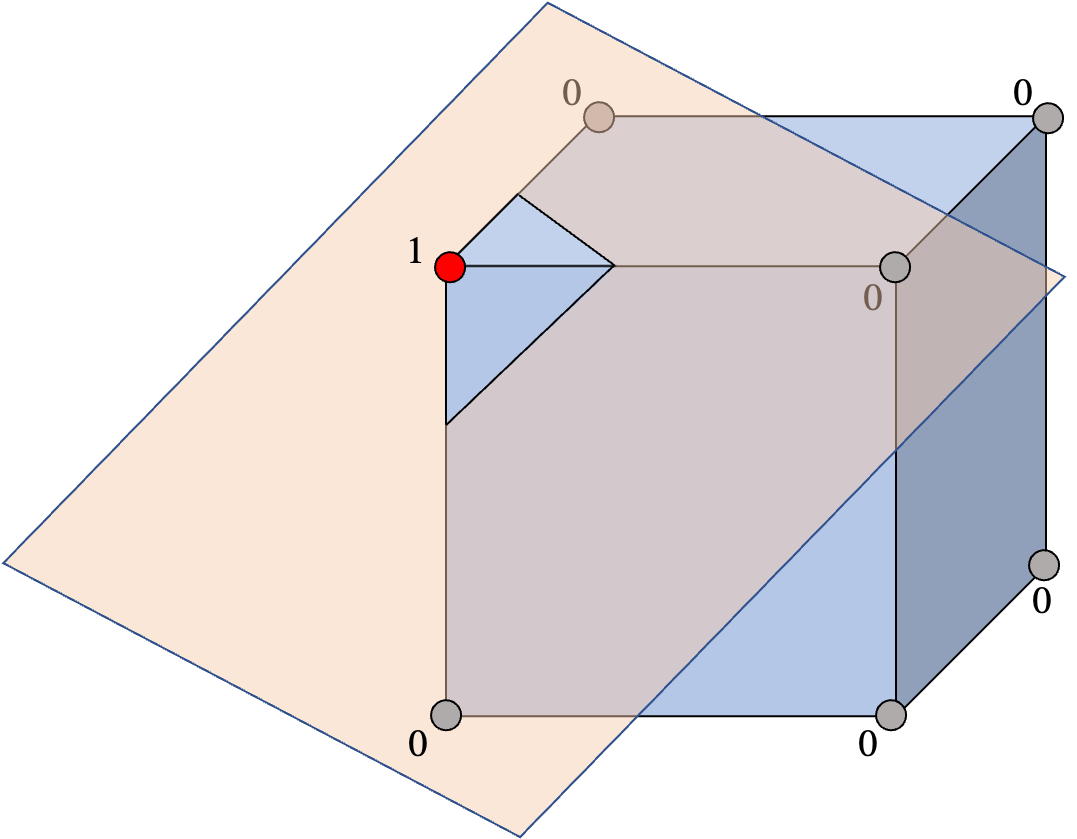}
 	\caption{Hard instance for obtaining relative error.}
 	\label{fig:lower_bound}
 \end{figure}
 
One might also hope to improve the error bound of Theorem \ref{thm:main_result}. For example, it would be ideal to obtain a pure relative error bound of the form $\|f(\X\hat{\w}) - \y\|_2^2 \leq C \cdot OPT$. Unfortunately, we can argue that this is not possible without taking a number of samples exponential in $d$. Consider a $(d+1)$ data matrix $\X$ whose first $d$  columns contain all of the $2^d$ vertices of the $d$ dimensional hypercube (i.e., there is a row containing every binary vector of length $d$). Let the last column of $\X$ be the all-ones vector. Consider the $1$-Lipschitz non-linearity $f(a) = \relu(a)$ and let $\y = f(\X\w)$ for some ground truth $\w$, in which case a pure relative error guarantee requires exactly recovering $\y$ (since $OPT = 0$). As visualized in Figure \ref{fig:lower_bound}, since there is a hyperplane separating any vertex in the hypercube from all other vertices, for any $i$ it is possible to find a $\w_i$ such that $\relu(\X\w_i)$ evaluates to $1$ in its $i^\text{th}$ coordinate, and $0$ everywhere else. Without observing at least $\Omega(2^d)$ entries from $\y$, we cannot distinguish between the case when $\y = \relu(\X\w_i)$ for a randomly chosen $i$ or $\y = \relu(\X\mathbf{0})$.

Finally, we note that a major open direction for future research is to obtain provable active learning methods in the agnostic setting for the more challenging multi-index model, or in the case when $f$ is not known in advance (and must be learned as part of the training process). We have some preliminary progress for the case where $f$ is unknown, but defer a full discussion to future work.

\bibliographystyle{plainnat}
\bibliography{ref} 
\clearpage

\section{Appendix}
\label{sec:app}
\begin{proof}[Proof of Lemma \ref{prop:bern}]
    Let $\x_i$ denote the $i^\text{th}$ row of $\X$ and let $ \u = f(\X\w_1) - f(\X \w_2)$. Our goal is to show that $\|\S\u\|_2^2$ approximately equals $\|\u\|_2^2$ with high probability. Let $j_i \in [n]$ be the index of the row from $\X$ selected by the $i^\text{th}$ row in $\S$. We have that  $\norm{\S \u}_2^2 = \frac{1}{m}\sum_{i=1}^m \frac{u_{j_i}^2}{p_{j_i}}$, where $p_{j_i} = \tau_{j_i}(\X)/\rank(\X)$. So we first observe that $\E \norm{\S \u}_2^2 = \norm{\u}_2^2$. Next, we will show that the random variable $\|\S\u\|_2^2$ concentrates around it's expectation by applying Berstein's inequality. To do so, we need to bound the variance of each term in the sum, $\frac{1}{m}\sum_{i=1}^m \frac{u_{j_i}^2}{p_{j_i}}$. We defining $\v = \X\w_1 - \X\w_2$ and observing that, since $f$ is $L$-Lipschitz, for every $i \in [n]$,  
	\begin{align}\label{eqn:bernlip}
		&u_i = \abs{f(\langle \x_i, \w_1\rangle) - f(\langle \x_i, \w_2\rangle)}_i\nonumber\\
  &\le L \cdot \abs{\langle \x_i, \w_1\rangle -\langle \x_i, \w_2\rangle}_i \nonumber\\
  &\leq L v_i. 
	\end{align}
	  We then have that:
	\begin{align*}
	\Var\left[\frac{u_{j_i}^2}{ p_{j_i}}\right] \leq \E\left[\left(\frac{u_{j_i}^2}{p_{j_i}}\right)^2\right] = \sum_{k=1}^n \frac{u_k^4}{p_k^2}\cdot p_k \leq \sum_{k=1}^n \frac{L^4v_k^4 \rank(\X)}{\tau_k(\X)}. 
	\end{align*}
	In the last step we have used the upper bound from \eqref{eqn:bernlip}, and the fact that $p_{k} = \tau_{k}(\X)/\rank(\X)$. From the definition of leverage scores (Definition \ref{def:lev_scores}), and the fact that $\v$ lies in the variance as follows:
	\begin{align*}
		\Var\left[\frac{u_{j_i}^2}{ p_{j_i}}\right] &\leq L^4\cdot \sum_{k=1}^n v_k^2 \|\v\|_2^2 \rank(\X)\\
            &= L^4\cdot \|\v\|_2^4 \cdot \rank(\X)\\ 
            &\leq  L^4\cdot d\|\v\|_2^4.
	\end{align*}
	Moreover, using the sames bounds as above, we always have that $\frac{u_{j_i}^2}{p_{j_i}} \leq \max_k L^2 v_k^2 \cdot \frac{\rank(\X)}{\tau_k(\X)} \leq L^2 \cdot d\|\v\|_2^2$. So, we can apply Bernstein's inequality to conclude that:
	\begin{align*}
		\Pr\left[\left|\|\S\u\|_2^2 - \|\u\|_2^2 \right| \geq t \right]
  &\leq 2\exp\left(-\frac{m t^2/2}{ L^4\cdot d\|\v\|_2^4 + t\cdot L^2 \cdot d\|\v\|_2^2/3}\right). 
	\end{align*}
	Setting $m = \frac{3d\log(2/\delta)}{\eps^2}$ and $t = \eps L^2 \|\v\|_2^2$ and plugging in we conclude that: 
	\begin{align*}
		\Pr\left[\left|\|\S\u\|_2^2 - \|\u\|_2^2 \right| \geq \eps L^2 \|\v\|_2^2 \right] 
  &\leq 2\exp\left(-\frac{\frac{3}{2} d\log(\delta/2) L^4 \|\v\|_2^4}{(1+\epsilon/3)d L^4 \|\v\|_2^4}\right)\leq \delta. 
	\end{align*}
	This completes the bound.
\end{proof}
\begin{proof}[Proof of Lemma \ref{lemm:net_construction}]
    Let $N$ be an $(\eps R)$-net in the Euclidean norm on $\mathcal{B}^d(R)$. I.e. for every $\v\in \mathcal{B}^d(R)$, there should be some point $\z\in N$ such that $\|\z - \v\|_2 \leq \eps R$. It is well known that such an $N$ exists with cardinality $|N| \leq \left(1+\frac{2}{\eps})\right)^d$ (see e.g. Lemma 5.2 in \cite{Vershynin:2012}). Applying Lemma \ref{prop:bern} with $\delta = \frac{1}{50|N|}$ and error parameter $\epsilon^2$ and combining with a union bound, we conclude that as long as $m \geq  c \frac{d^2\log(1/\eps)}{\eps^4}$ for a fixed constant $c$, then with probability $99/100$, for all $\z\in N$,
	\begin{align}
		\label{eq:bound_for_net}
		\norm{f(\X\z) - f(\X\w^*)}_2^2 \in \left[\norm{\S f(\X\z) - \S f(\X\w^*)}_2^2 \pm \eps^2L^2 \|\X\z - \X \w^*\|_2^2\right].
	\end{align}
 Furthermore, when  $m \geq  c \frac{d^2\log(1/\eps)}{\eps^4}$, by the subspace embedding from \ref{lem:subspace}, we have that for all $\mathbf{a}, \mathbf{b}$, 
 \begin{align}
 \label{eq:subspace_special}
 \|\S\X\mathbf{a} - \S\X\mathbf{b}\|_2^2 \leq 2 \|\X\mathbf{a} - \X\mathbf{b}\|_2^2 
\end{align} 
with probability $99/100$.
 
	Now, let $\z^*$ be the closest point to $\wh$ in $N$. I.e., $\z^* = \argmin_{z \in N}\norm{\z - \wh}_2$. Applying \eqref{eq:bound_for_net}, \eqref{eq:subspace_special}, and the fact that for any two vectors $\mathbf{a},\mathbf{b}$, $\|\mathbf{a} + \mathbf{b}\|_2^2 \leq 2 \|\mathbf{a}\|_2 + 2\|\mathbf{b}\|_2^2$, we have the following inequalities.
	\begin{align*}\label{eq:introduce_netpoint}
		\|f(\X\wh) &- f(\X\w^*)\|_2^2 \le 2\norm{ f(\X\z^*) - f(\X\w^*)}_2^2 + 2\norm{ f(\X \wh) - f(\X\z^*)}_2^2 \\
		&\leq2\norm{\S f(\X\z^*) - \S f(\X\w^*)}_2^2 + 2 \eps^2L^2 \|\X\z^* - \X \w^*\|_2^2+  2\norm{ f(\X \wh) - f(\X\z^*)}_2^2\\
		&\leq  4\norm{\S f(\X\wh) - \S f(\X\w^*)}_2^2 + 4\norm{\S f(\X\z^*) - \S f(\X\wh)}_2^2 +2 \eps^2L^2 \|\X\z^* - \X \w^*\|_2^2 +  2\norm{ f(\X \wh) - f(\X\z^*)}_2^2\\
		&\leq\ 4  \norm{\S f(\X\wh) - \S f(\X\w^*)}_2^2 + 4L^2\norm{\S \X\z^* - \S \X\wh}_2^2 + 2\eps^2L^2 \|\X\z^* - \X \w^*\|_2^2 +2 L^2 \|\X\wh - \X \z^*\|_2^2\\
		&\leq\ 4  \norm{\S f(\X\wh) - \S f(\X\w^*)}_2^2 + 4L^2\cdot 2\cdot \norm{\X\z^* - \X\wh}_2^2 
  + 4\eps^2L^2 (R^2 + \|\X \w^*\|_2^2) + 2 L^2 \|\X\wh - \X \z^*\|_2^2\\
		&\leq  4  \norm{\S f(\X\wh) - \S f(\X\w^*)}_2^2 + 8\eps^2L^2R^2 + 4\eps^2L^2 R^2 + 4\|\X \w^*\|_2^2 +2 \eps^2 L^2R^2.
	\end{align*}
	Combining terms and adjusting constants on $\eps$ yields the bound.
\end{proof}
\end{document}